\documentclass{article}

\usepackage{arxiv}

\usepackage[utf8]{inputenc} 
\usepackage[T1]{fontenc}    
\usepackage{hyperref}       
\usepackage{url}            
\usepackage{booktabs}       
\usepackage{amsfonts}       
\usepackage{nicefrac}       
\usepackage{microtype}      
\usepackage{lipsum}
\usepackage{graphicx}
\graphicspath{ {./images/} }
\usepackage{graphicx,float}
\usepackage{amsmath,amssymb}
\usepackage{amsthm}
\theoremstyle{plain}
\newtheorem{theorem}{Theorem}[section]
\newtheorem{lemma}[theorem]{Lemma}

\theoremstyle{definition}
\newtheorem{definition}[theorem]{Definition}

\usepackage{tikz}
\usetikzlibrary{positioning,arrows.meta,patterns}
\definecolor{connectColor}{RGB}{86,155,189}
\definecolor{blockColor}{RGB}{235,235,235}
\definecolor{orangeColor}{RGB}{255,165,0}

\newcommand{\mask}[2]{%
  \pgfmathsetlengthmacro{\matrixWidth}{5cm}
  \pgfmathsetlengthmacro{\cellSize}{\matrixWidth / #2}
  \begin{tikzpicture}[x=\cellSize, y=\cellSize]
    \def\n{#2}
    \foreach [count=\idx] \val in {#1} {
      \pgfmathtruncatemacro{\i}{int(ceil(\idx/\n))}
      \pgfmathtruncatemacro{\j}{int(\idx - (\i-1)*\n)}
      \ifnum \val=1
        \fill[connectColor] (\j-1, \n-\i) rectangle ++(1,1);
      \else
        \ifnum \val=2
          \fill[orangeColor] (\j-1, \n-\i) rectangle ++(1,1);
        \else
          \ifnum \val=3
            \fill[blockColor] (\j-1, \n-\i) rectangle ++(1,1);
            \fill[pattern=north west lines, pattern color=connectColor]
                 (\j-1, \n-\i) rectangle ++(1,1);
          \else
            \fill[blockColor] (\j-1, \n-\i) rectangle ++(1,1);
          \fi
        \fi
      \fi
      \draw[gray!50, thin] (\j-1, \n-\i) rectangle ++(1,1);
    }
    \ifnum \n<9
      \foreach \i in {1,...,\n}{
        \node[left=0.3cm] at (0, \n-\i+0.5) {$\i$};
      }
      \foreach \j in {1,...,\n}{
        \node[above=0.15cm] at (\j-0.5, \n) {$\j$};
      }
    \fi
  \end{tikzpicture}%
}

\title{Hasse Diagrams for Attention: A Partial Order Framework for Designing Transformer Masks}

\author{
 Chentao Li \\
  School of Mathematics and Statistics\\
  Northeast Petroleum University\\
  Daqing, Heilongjiang 163318, China \\
  \texttt{li\_chentao \@foxmail.com} \\
   \And
  Han Guo \\
  School of Mathematics and Statistics\\
  Northeast Petroleum University\\
  Daqing, Heilongjiang 163318, China \\
  \texttt{zhihui75888  \@163.com} \\
  }

\begin{document}
\maketitle
\begin{abstract}
During the training of large Transformer models, attention masks regulate the scope and direction of information flow across a sequence. Numerous mask variants exist, and operators such as FlexAttention already support arbitrary attention masks. Nevertheless, a systematic formal analysis of the information-flow structure induced by arbitrary masks has been missing. This paper develops a complete theoretical framework. We prove that, with sufficient depth, the information flow of a multi-layer Transformer converges to a Hasse diagram—a directed acyclic graph representing a partial order. Building on this, we recast the design of parallel training tasks as the problem of finding a minimal common supergraph of Hasse diagrams, and we establish a criterion for the minimal common supergraph. This yields a constructive method to derive attention masks directly from a family of tasks. Applying the framework, we design two novel masks: a block-generation attention mask that ensures training–inference consistency (Block Two-Stream Attention), and a fully supervised bidirectional attention mask (Butterfly Attention). These results demonstrate the framework's capacity to discover new structures.
\end{abstract}


\section{Introduction}
The attention mechanism lies at the core of the Transformer architecture, endowing each output position with the ability to aggregate global information, while attention masks control the direction of information flow.

Within the literature on attention masks, one line of work focuses on sparsifying causal masks. Iz Beltagy et al.\ \cite{DBLP:journals/corr/abs-2004-05150} proposed sliding-window attention, and Heng\ \cite{zhang2025dam} introduced a dynamic sparse attention mechanism. To avoid hand-crafted acceleration kernels for every sparse pattern, Dong et al.\ \cite{FlexAttention} realized a FlexAttention operator that supports arbitrary attention masks. 

At the same time, the design of an attention mask not only governs information flow but also requires the input sequence and the supervision target to be adapted accordingly. For instance, autoregressive language modeling\ \cite{Pre-Training} employs a causal mask, so that each position can attend only to itself and earlier tokens, predicting the next token step by step. Masked language modeling, represented by BERT\ \cite{BERT}, uses a fully connected mask together with mask tokens, enabling the model to predict masked positions based on bidirectional context. Yang et al.\ proposed two-stream attention (XLNet)\ \cite{XLNet}, which, for the first time, allowed the input sequence to be longer than the data sequence, showing that tokens can appear multiple times and thus offer a larger design space. Although the original implementation did not rely on a pure mask construction, it can be realized by concatenating the two streams into a long sequence and building a mask on that sequence; hence it also falls within the scope of attention mask design.

Despite the considerable results obtained by analyzing Transformers through graph theory—treating the attention mask as a reachability matrix being a standard approach—this perspective, when applied to arbitrary masks, yields only arbitrary directed graphs, which is too broad. Consequently, no prior work has successfully extracted mathematical structure from an arbitrary attention mask. Moreover, although two-stream attention broke conceptual limitations and inspired many subsequent designs, no systematic design methodology was proposed.

To address these issues, this paper extends the object of analysis from the attention mask to the Transformer block. By incorporating residual connections, we add self-loops to the reachability matrix, so that, after stacking multiple layers, the reachability matrix becomes not only transitive but also reflexive. This exactly satisfies the conditions of a preorder. From the preorder we naturally derive equivalence classes, and then induce a partial order on the quotient set; the Hasse diagram of this partial order is precisely the information-flow graph of the model.

On this foundation, we formally reduce the problem of designing training tasks for attention mechanisms to the minimal-common-supergraph problem. We provide a criterion for the minimal common supergraph, offering a general method for designing attention masks and training schemes directly from a family of tasks. We then design two entirely new attention mechanisms: Butterfly Attention and Block Two-Stream Attention. Butterfly Attention achieves bidirectional attention and 100\% supervision density without granting any position access to its own token. Block Two-Stream Attention realizes, for the first time, block-generation training without approximation. These constructions validate the design capability of our framework.

\section{Characterization}
\label{sec:graph_and_reachability}

This section analyzes a multi-layer Transformer with an arbitrary mask and formalizes its information-flow graph as a Hasse diagram.

A standard approach treats an attention mask as the adjacency matrix of a directed graph. An arbitrary mask, however, merely describes an arbitrary directed graph, from which no useful mathematical structure can be extracted. This section shifts the object of analysis from the attention mechanism to the entire Transformer block. This extension brings residual connections into the picture, endowing each node of the directed graph with a self-loop and thereby supplying reflexivity. At the same time, stacking multiple Transformer blocks induces transitivity.

With reflexivity and transitivity in place, one can apply the standard mathematical machinery that leads from preorders to partial orders and finally to Hasse diagrams, yielding the information-flow graph of a Transformer under an arbitrary mask.

\subsection{Single-layer and Multi-layer Transformer Information Flow Graph Modeling}

In a Transformer, the attention mechanism together with residual connections determines the paths along which information flows among different positions in a sequence. To analyze this flow rigorously, we model the computation of each layer as a bipartite directed graph, and from this we construct the graph model of the entire multi-layer network.

\begin{definition}[Single-Layer Information Flow Graph]
\label{def:single_layer_graph}
Let the input to the $l$-th Transformer block be the sequence of representations $X = [x_1; \dots ; x_n]\in\mathbb{R}^{n\times d}$, and let the output be $Y = [y_1; \dots ; y_n]\in\mathbb{R}^{n\times d}$. Denote the set of input positions by $U = \{u_1,\dots,u_n\}$ and the set of output positions by $V = \{v_1,\dots,v_n\}$. The information flow of this layer is characterized by a bipartite directed graph $G = (U,V,E)$, where the edge set $E\subseteq U\times V$ satisfies:
\begin{equation}
    (u_i, v_j) \in E \;\Longleftrightarrow\; \text{information at position $i$ can flow to the output at position $j$}.
\end{equation}

The structure of $G$ is equivalent to an adjacency matrix $A\in\{0,1\}^{n\times n}$ defined by:
\begin{equation}\label{eq:adj_def}
    (u_i, v_j) \in E \Longleftrightarrow A_{i,j} = 1.
\end{equation}

In standard Transformer implementations, the edge set $E$ is jointly determined by the attention mask $M\in\{0,-\infty\}^{n\times n}$ and the residual connections. The attention computation is:
\begin{equation}
    \text{Attention}(X) = \text{softmax}\!\left(\frac{QK^\top}{\sqrt{d_k}} + M\right)V,
\end{equation}
where $Q = XW_Q$, $K = XW_K$, $V = XW_V$. A mask value $M_{ij}=0$ allows position $j$ to attend to position $i$, whereas $M_{ij}=-\infty$ forbids it. In addition, the residual connection provides each position with a direct edge from input to output. Combining these factors, the adjacency matrix $A$ can be expressed explicitly as:
\begin{equation}\label{eq:A_from_M}
    A_{i,j} = 
    \begin{cases}
        1, & \text{if } M_{j,i} = 0 \text{ or } i = j,\\
        0, & \text{if } M_{j,i} = -\infty.
    \end{cases}
\end{equation}
Here $M$ takes only values $0$ or $-\infty$, and the residual connection guarantees $A_{i,i}=1$ for all $i$. Because the feed-forward layers and normalization layers are position-wise operations, they do not affect the information-flow graph and are omitted in this modeling.
\end{definition}
In the subsequent Boolean matrix operations, we adopt the rules of Boolean algebra: logical addition $\lor$ ($1\lor1=1$, $1\lor0=1$, $0\lor0=0$) and logical multiplication $\land$ ($1\land1=1$, $1\land0=0$, $0\land0=0$). For brevity, Boolean matrix multiplication is denoted by ``$\times$'', i.e.\ $(\mathbf{B}\times\mathbf{C})_{i,j} = \bigvee_k \mathbf{B}_{i,k}\land\mathbf{C}_{k,j}$.

\begin{definition}[Multi-Layer Information Flow Graph]
\label{def:multi_layer_graph}
Stack $L$ structurally identical single-layer blocks. Let the bipartite directed graph of layer $l$ be $G^{(l)} = (U^{(l)}, V^{(l)}, E^{(l)})$, where adjacent layers satisfy $V^{(l-1)} = U^{(l)}$ ($l=1,\dots,L-1$) and all layers share the same adjacency matrix $A$ (derived from the same mask $M$ according to Definition~\ref{def:single_layer_graph}). The embedding layer and the output head are also position-wise and do not change the information-flow graph.

The information flow of the whole $L$-layer Transformer forms a multi-layer bipartite directed graph $\mathcal{G} = (\mathcal{V}, \mathcal{E})$ with vertex and edge sets:
\begin{equation}
    \mathcal{V} = V^{(L)} \cup \bigcup_{l=0}^{L} U^{(l)},\quad
    \mathcal{E} = \bigcup_{l=1}^{L} E^{(l)}.
\end{equation}
Since all edges point from lower layers to higher layers, $\mathcal{G}$ is a strictly layered acyclic graph.
\end{definition}

To further characterize the reachability from initial inputs to final outputs, we introduce the notion of a reachability matrix.

\begin{definition}[Reachability Matrix]
\label{def:reachability}
In $\mathcal{G}$, collect the reachability of all input–output pairs into a reachability matrix $R^{(L)}\in\{0,1\}^{n\times n}$ with elements:
\begin{equation}
    R^{(L)}_{i,j} = 1 \;\Longleftrightarrow\; u_i^0 \text{ can reach } v_j^L.
\end{equation}
Since every layer has the same adjacency matrix and information flows only forward through the layers, Boolean matrix multiplication directly yields:
\begin{equation}\label{eq:R_power}
    R^{(L)} = A^L = \underbrace{A\times A\times\cdots\times A}_{L\text{ times}}.
\end{equation}
\end{definition}

\subsection{Monotonicity and Convergence of the Reachability Matrix}

In a multi-layer Transformer, as the number of layers $L$ increases, the reachability matrix acquires more connections but eventually converges to a stable structure. We now prove this fact rigorously.

\begin{lemma}[Monotonicity]
\label{lem:monotonicity}
For any $L\ge1$ and all $i,j\in\{1,\dots,n\}$,
\begin{equation}
   (R^{(L)})_{i,j} \le (R^{(L+1)})_{i,j},
\end{equation}
where $\le$ is understood in the Boolean sense: if $(R^{(L)})_{i,j}=1$, then necessarily $(R^{(L+1)})_{i,j}=1$.
\end{lemma}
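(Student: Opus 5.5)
The monotonicity comes from the diagonal of $A$ being all ones, which the residual connection guarantees by \eqref{eq:A_from_M}. I will use the closed form $R^{(L)} = A^L$ from \eqref{eq:R_power} and write $R^{(L+1)} = A^L \times A = R^{(L)} \times A$. Associativity of Boolean matrix multiplication, which holds because $(\{0,1\},\lor,\land)$ is a semiring, justifies this regrouping.

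Next I expand the $(i,j)$ entry with the definition of Boolean multiplication:
\begin{equation*}
(R^{(L+1)})_{i,j} = \bigvee_{k=1}^{n} (R^{(L)})_{i,k} \land A_{k,j}.
\end{equation*}
The disjunction contains the term with $k=j$, namely $(R^{(L)})_{i,j} \land A_{j,j}$. By \eqref{eq:A_from_M}, $A_{j,j}=1$ for every $j$, so this term equals $(R^{(L)})_{i,j}$. A disjunction is at least as large as each of its terms. Hence $(R^{(L+1)})_{i,j} \ge (R^{(L)})_{i,j}$; in particular, if $(R^{(L)})_{i,j}=1$ then $(R^{(L+1)})_{i,j}=1$.

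The same argument can be phrased in the graph language of Definition~\ref{def:multi_layer_graph}. Take any path from $u_i^0$ to $v_j^L$. Append the residual self-edge from position $j$ at layer $L$ to position $j$ at layer $L+1$. The result is a path from $u_i^0$ to $v_j^{L+1}$, so reachability persists.

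No step here is a real obstacle. The only point that needs care is that reflexivity ($A_{j,j}=1$) is essential: without the residual connection, $A^{L+1}$ need not dominate $A^L$ entrywise. Nilpotent masks such as a strictly causal one are counterexamples, so the proof must cite the $i=j$ clause of \eqref{eq:A_from_M} explicitly.
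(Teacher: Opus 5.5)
Your proposal is correct and follows essentially the same route as the paper: expand $(A^{L+1})_{i,j} = \bigvee_k (A^L)_{i,k} \land A_{k,j}$, isolate the $k=j$ term, and use the residual guarantee $A_{j,j}=1$. Your path-extension reformulation and your remark that nilpotent masks without residual connections break monotonicity are accurate additions, not needed for the proof.
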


\begin{proof}
From $R^{(L)} = A^L$,
\[
    (A^{L+1})_{i,j} = \bigvee_{k=1}^{n} (A^L)_{i,k} \land A_{k,j}.
\]
Taking the term with $k=j$ gives
\[
    (A^{L+1})_{i,j} \ge (A^L)_{i,j} \land A_{j,j}.
\]
Residual connections ensure $A_{j,j}=1$, hence
\[
    (A^{L+1})_{i,j} \ge (A^L)_{i,j} \land 1 = (A^L)_{i,j},
\]
i.e., $(R^{(L+1)})_{i,j} \ge (R^{(L)})_{i,j}$.
\end{proof}

Let $N(L)$ be the number of entries in $R^{(L)}$ equal to $1$. Clearly $0\le N(L)\le n^2$. As $L$ increases, $N(L)$ is nondecreasing and bounded above, so it must eventually stop changing. The following lemma shows that when this happens, the matrix has converged.

\begin{lemma}[Sufficient Condition for Convergence]
\label{lem:convergence_condition}
If there exists $k\ge1$ such that $N(k+1)=N(k)$, then $R^{(k+1)} = R^{(k)}$, and for all $m\ge k$, $R^{(m)} = R^{(k)}$.
\end{lemma}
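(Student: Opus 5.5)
By Lemma~\ref{lem:monotonicity}, every entry equal to $1$ in $R^{(k)}$ is also $1$ in $R^{(k+1)}$; that is, the support of $R^{(k)}$ is contained in the support of $R^{(k+1)}$. Both supports are finite sets. If $N(k+1)=N(k)$, the two sets have the same cardinality, and a finite set contained in another of equal size equals it. So the supports coincide, which gives $R^{(k+1)}=R^{(k)}$ entrywise. Concretely: if some $(i,j)$ had $(R^{(k)})_{i,j}=0$ but $(R^{(k+1)})_{i,j}=1$, then, since no $1$ is lost, $N(k+1)\ge N(k)+1$, a contradiction.

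Next I extend this to all $m\ge k$ by induction on $m$, using Equation~\eqref{eq:R_power}, $R^{(m)}=A^m$. The base case $m=k$ is trivial, and $m=k+1$ was just shown. For the inductive step, assume $A^{m}=A^{k}$ for some $m\ge k$. Boolean matrix multiplication is associative, so
\[
A^{m+1}=A^{m}\times A=A^{k}\times A=A^{k+1}=A^{k},
\]
where the last equality is the first step. Hence $R^{(m+1)}=R^{(k)}$, which closes the induction.

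The only point needing care is the associativity of Boolean matrix multiplication; this is standard because $(\{0,1\},\lor,\land)$ is a semiring. The induction must also substitute $A^m$ by $A^k$ as a whole matrix before multiplying by $A$, rather than arguing entry by entry. The only place the residual connection enters is through Lemma~\ref{lem:monotonicity}; without $A_{j,j}=1$, equal counts would not force equal matrices. I expect no real obstacle; the main thing to get right is stating the support-inclusion argument cleanly.
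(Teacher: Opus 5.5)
Your proposal is correct and follows essentially the same route as the paper. You use monotonicity to show that equal counts $N(k+1)=N(k)$ rule out any $0\to1$ change, so $R^{(k+1)}=R^{(k)}$, and then induct via $A^{m+1}=A^{m}\times A=A^{k}\times A=A^{k+1}=A^{k}$ to get $R^{(m)}=R^{(k)}$ for all $m\ge k$.
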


\begin{proof}
Assume $N(k+1)=N(k)$ but $R^{(k+1)}\neq R^{(k)}$. By Lemma~\ref{lem:monotonicity}, no entry can decrease, so there must exist a pair $(i,j)$ with $(R^{(k+1)})_{i,j} > (R^{(k)})_{i,j}$, i.e., a change from $0$ to $1$. This would imply $N(k+1)\ge N(k)+1$, contradicting $N(k+1)=N(k)$. Hence $R^{(k+1)} = R^{(k)}$. For any $t\ge1$, $R^{(k+t)} = A^{k+t} = A^{k+t-1}\times A$, and using $A^{k}=R^{(k)}$ together with $A^{k+1}=A^{k}$ one proves by induction that $R^{(k+t)} = R^{(k)}$.
\end{proof}

\begin{theorem}[Convergence of the Reachability Matrix]
\label{thm:convergence}
The limit $R = \lim_{L\to\infty} R^{(L)}$ exists and is unique, and it satisfies:
\begin{equation}\label{eq:limit_eq}
    R = R \times A = R \times R.
\end{equation}
\end{theorem}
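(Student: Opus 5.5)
The plan is to reduce everything to the two preceding lemmas. By Lemma~\ref{lem:monotonicity}, the counts $N(1)\le N(2)\le\cdots$ form a nondecreasing sequence of integers bounded above by $n^2$. Such a sequence cannot increase strictly more than $n^2$ times, so there is a first index $k$ with $N(k+1)=N(k)$. One can take $k\le n^2$, since $N(1)\ge n$ by the diagonal entries $A_{i,i}=1$. Lemma~\ref{lem:convergence_condition} then says that $R^{(m)}=R^{(k)}$ for all $m\ge k$. Since the sequence $R^{(L)}$ is eventually constant, I would define $R:=R^{(k)}=A^k$ and declare $\lim_{L\to\infty}R^{(L)}=R$, with convergence understood entrywise in the discrete sense.

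For uniqueness, the limit of an eventually constant sequence is independent of the chosen stabilization index. Suppose $k$ and $k'$ are two indices after which the sequence is constant, with $k\le k'$. Then $R^{(k)}=R^{(k')}$, because $R^{(m)}=R^{(k)}$ for every $m\ge k$, and in particular for $m=k'$. So the limit is well defined.

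For the identities in \eqref{eq:limit_eq}, I would argue as follows.
\begin{itemize}
\item First, $R\times A=A^k\times A=A^{k+1}=R^{(k+1)}=R$, by stabilization.
\item Second, $R\times R=A^{k}\times A^{k}=A^{2k}=R^{(2k)}$. Since $2k\ge k$, this equals $R$.
\end{itemize}
Both steps use associativity of Boolean matrix multiplication, so $A^a\times A^b=A^{a+b}$. This is standard: $\lor$ distributes over $\land$ and both operations are associative, which makes $\{0,1\}$ with these operations a semiring. I would state it briefly.

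The only point needing care is the precise meaning of ``limit'' for a sequence of $0/1$ matrices. I would make explicit that convergence means eventual constancy, which is equivalent to convergence in any norm on this finite set. Beyond that the argument is routine. The main substantive input, that stabilization persists once $N$ stops growing, is already supplied by Lemma~\ref{lem:convergence_condition}.
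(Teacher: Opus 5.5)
Your proposal is correct and follows the paper's proof essentially step for step. Boundedness of the nondecreasing counts $N(L)$ gives an index with $N(k+1)=N(k)$, Lemma~\ref{lem:convergence_condition} gives eventual constancy, and the identities follow from $R=A^k$; you just spell out the step the paper calls ``direct'', namely $R\times A=A^{k+1}$ and $R\times R=A^{2k}$ via associativity (one minor wording slip: the semiring law behind associativity is that $\land$ distributes over $\lor$, not the reverse, though both hold in Boolean algebra, so nothing breaks).
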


\begin{proof}
The sequence $\{N(L)\}_{L=1}^{\infty}$ takes values in the finite set $\{0,1,\dots,n^2\}$. By the pigeonhole principle, there exists $L_0$ such that $N(L_0+1)=N(L_0)$. By Lemma~\ref{lem:convergence_condition}, for all $L\ge L_0$, $R^{(L)} = R^{(L_0)}$; thus the limit exists and equals this constant matrix $R$. Equation~\eqref{eq:limit_eq} follows directly from Definition~\ref{def:reachability}.
\end{proof}

Theorem~\ref{thm:convergence} shows that, as long as the Transformer has enough layers, the information-flow structure necessarily converges to a limiting reachability matrix $R$. If $R = R^{(1)} = A$, i.e., convergence occurs within a single layer, we call the mask a dense attention mask; the sparser the mask, the more layers are needed for convergence. In the subsequent analysis we assume the network depth is sufficient for the reachability matrix to have reached its limit $R$, and we refer to $R$ directly as the reachability matrix of the Transformer. Next we introduce a partial order on position indices based on $R$, and visualize the entire information-flow structure using a Hasse diagram.

\subsection{Partial-Order Structure of Information Flow and the Hasse Diagram}
\label{sec:partial_order}

In the previous section we proved that, with increasing depth, the reachability matrix of a Transformer converges to a limit $R$. We now translate the reachability relation encoded by $R$ into a partial-order structure on the position indices and visualize it with a Hasse diagram.

\begin{definition}[Query Relation]
\label{def:query_relation}
On the set of position indices $S = \{1,2,\dots,n\}$, define a binary relation $\preceq$ from the limiting reachability matrix $R$:
\begin{equation}
    j \preceq i \quad\Longleftrightarrow\quad R_{i,j} = 1.
\end{equation}
Intuitively, when the network depth is sufficiently large, the information of input position $j$ can reach the output position $i$ through the intermediate layers. We say that $j$ is \emph{exposed} to $i$, or $i$ \emph{can query} $j$. If $A_{i,j}=1$ (i.e., a single-layer connection exists), we say $i$ \emph{directly queries} $j$; otherwise $i$ \emph{indirectly queries} $j$.
\end{definition}

\begin{lemma}[Preorder Property]
\label{lem:preorder}
The relation ``$\preceq$'' is a \emph{preorder}, i.e., it satisfies:
\begin{enumerate}
    \item \emph{Reflexivity}: for every $i\in S$, $i \preceq i$;
    \item \emph{Transitivity}: if $j \preceq k$ and $k \preceq i$, then $j \preceq i$.
\end{enumerate}
\end{lemma}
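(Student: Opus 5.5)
Both properties come straight from the limiting reachability matrix $R$ of Theorem~\ref{thm:convergence} and the Boolean product rules. By Definition~\ref{def:query_relation}, $j \preceq i$ means $R_{i,j}=1$. So reflexivity is the claim that every diagonal entry of $R$ is $1$, and transitivity is closure of the support of $R$ under composition. I will prove each separately, using only that $A_{i,i}=1$ (residual connections), the monotonicity of Lemma~\ref{lem:monotonicity}, and the fixed-point identity $R = R\times R$ from~\eqref{eq:limit_eq}.

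\emph{Reflexivity.} By~\eqref{eq:A_from_M}, $A_{i,i}=1$ for every $i$, so $R^{(1)}_{i,i}=1$. Lemma~\ref{lem:monotonicity} says entries equal to $1$ never revert to $0$ as $L$ grows. By induction, $R^{(L)}_{i,i}=1$ for all $L\ge1$. Since $R$ equals $R^{(L)}$ for all $L$ beyond some $L_0$ (Theorem~\ref{thm:convergence}), $R_{i,i}=1$, i.e.\ $i\preceq i$.

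\emph{Transitivity.} Suppose $j\preceq k$ and $k\preceq i$. This is a matter of orienting indices correctly: $j\preceq k$ means $R_{k,j}=1$, and $k\preceq i$ means $R_{i,k}=1$. Expanding the Boolean product,
\[
(R\times R)_{i,j} = \bigvee_{m=1}^{n} R_{i,m}\land R_{m,j} \;\ge\; R_{i,k}\land R_{k,j} = 1 .
\]
By~\eqref{eq:limit_eq}, $R = R\times R$, so $R_{i,j}=1$, that is, $j\preceq i$.

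The argument is essentially bookkeeping. The one delicate point is the index convention: the relation $\preceq$ reverses the order of the matrix indices, so the two hypotheses must be placed in the order that makes the middle index $k$ match in $R_{i,k}\land R_{k,j}$. The other is relying on $R = R\times R$. If one worries that this identity is only asserted in Theorem~\ref{thm:convergence}, it can be justified directly: $R\times R = A^{L_0}\times A^{L_0} = A^{2L_0} = R^{(2L_0)} = R$, because the sequence is constant for $L\ge L_0$. This depends on the associativity of Boolean matrix multiplication, which holds because $\lor$ distributes over $\land$ in the same way addition and multiplication do in a semiring.
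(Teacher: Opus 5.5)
Your proof is correct and is essentially the paper's own argument: reflexivity from $A_{i,i}=1$ together with monotonicity and stabilization of $R^{(L)}$, and transitivity by isolating the term $R_{i,k}\land R_{k,j}$ in the Boolean expansion of $R = R\times R$, with the indices oriented exactly as the paper orients them. Your explicit check $R\times R = A^{L_0}\times A^{L_0} = A^{2L_0} = R$ fills in a step that the paper only says ``follows directly'' in Theorem~\ref{thm:convergence}; strictly, the law that gives associativity there is that $\land$ distributes over $\lor$, i.e.\ that $(\{0,1\},\lor,\land)$ is a semiring.
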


\begin{proof}
Reflexivity follows from residual connections: by Definition~\ref{def:single_layer_graph}, residual connections guarantee $A_{i,i}=1$, and by monotonicity (Lemma~\ref{lem:monotonicity}), $(R^{(L)})_{i,i}=1$ for all $L\ge1$. Taking the limit yields $R_{i,i}=1$, hence $i \preceq i$.

Transitivity follows from Theorem~\ref{thm:convergence}. By Boolean matrix multiplication,
\[
    R_{i,j} = \bigvee_{k=1}^{n} R_{i,k} \land R_{k,j}.
\]
Thus if $j\preceq k$ ($R_{k,j}=1$) and $k\preceq i$ ($R_{i,k}=1$), the term with $k$ gives $R_{i,j} \ge R_{i,k} \land R_{k,j} = 1$, so $R_{i,j}=1$, i.e., $j\preceq i$.
\end{proof}

\begin{figure}[H]
    \centering
    $\vcenter{\hbox{%
        \begin{tikzpicture}
            \mask{
            1,0,0,0,0,
            0,1,0,0,0,
            3,3,1,1,0,
            1,1,1,1,0,
            3,3,3,1,1
            }{5}
        \end{tikzpicture}%
    }}$
    \hspace{1.2cm}
    $\longrightarrow$
    \hspace{0.2cm}
    $\vcenter{\hbox{%
        \begin{tikzpicture}[
        >={Stealth[length=5pt, width=4pt]},
        vertex/.style={
            circle,
            fill=connectColor!18,
            draw=none,
            minimum size=1.15cm,
            inner sep=0pt,
            font=\sffamily\bfseries\small,
            text=black!75
        },
        edge/.style={
            ->,
            line width=0.5pt,
            color=connectColor!50!black,       
            shorten <=3pt,
            shorten >=3pt
        },
        biedge/.style={
            <->,
            line width=0.5pt,
            color=connectColor!50!black,
            shorten <=3pt,
            shorten >=3pt
        }
        ]
        \node[vertex] (1) at (0,0)    {1};
        \node[vertex] (2) at (1.25,2) {2};
        \node[vertex] (3) at (2.5,0)  {3};
        \node[vertex] (4) at (3.75,2) {4};
        \node[vertex] (5) at (5,1) {5};

        \draw[edge] (1) -- (3);
        \draw[edge] (2) -- (3);
        \draw[edge] (2) -- (4);
        \draw[biedge] (3) -- (4);  
        \draw[edge] (1) -- (4);   
        \draw[edge] (1) -- (5);
        \draw[edge] (2) -- (5);
        \draw[edge] (3) -- (5);
        \draw[edge] (4) -- (5);
        \end{tikzpicture}%
    }}$
    \caption{Preorder relation defined by $R$ on the indices. In the mask matrix, solid blue blocks indicate the original matrix $A$, and hatched blocks indicate connections added by $R$ beyond $A$. An arrow $i\rightarrow j$ means information flows from $i$ to $j$.}
    \label{fig:mask-to-graph}
\end{figure}

Figure~\ref{fig:mask-to-graph} shows a reachability matrix $R$ and its corresponding preorder relation $\preceq$.

We can now use the preorder ``$\preceq$'' to define a partial order on the positions.

\begin{definition}[Equivalence Relation and Quotient Set]
\label{def:equivalence}
Define a relation $\sim$ on $S$ by:
\begin{equation}
    i \sim j \quad\Longleftrightarrow\quad i \preceq j \text{ and } j \preceq i.
\end{equation}
Clearly $\sim$ is an equivalence relation. Denote the equivalence class of $i$ by $c_{[i]}$:
\[
    c_{[i]} = \{k \in S \mid k \sim i\}.
\]
The set of all equivalence classes is the quotient set $\mathcal{S} = S /{\sim} = \{c_{[i]} \mid i \in S\}$.
\end{definition}

Within an equivalence class $c_{[i]}$, the input of any position can influence the output of every position in the class, and this relation is symmetric. Hence, for computation, these positions must be treated as a whole.

We now induce a partial order on the quotient set $\mathcal{S}$.

\begin{theorem}[Induced Partial Order]
\label{thm:induced_partial_order}
On $\mathcal{S}$ define a binary relation $\le$ by:
\begin{equation}
    c_{[i]} \le c_{[j]} \quad\Longleftrightarrow\quad i \preceq j.
\end{equation}
Then:
\begin{enumerate}
    \item $\le$ is well defined (independent of the choice of representatives);
    \item $(\mathcal{S},\le)$ forms a partially ordered set (poset), satisfying reflexivity, antisymmetry, and transitivity.
\end{enumerate}
We call $\le$ the \emph{query order}. Intuitively, $c_{[i]} \le c_{[j]}$ means that the key-value information of equivalence class $c_{[i]}$ is used when computing $c_{[j]}$; $c_{[j]}$ can query $c_{[i]}$.
\end{theorem}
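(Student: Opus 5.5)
The proof is the standard passage from a preorder to its quotient poset. It uses only Lemma~\ref{lem:preorder}, which gives reflexivity and transitivity of $\preceq$, together with the definition of $\sim$ in Definition~\ref{def:equivalence}. I will first check that $\le$ is well defined and then verify the three poset axioms one at a time.

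\textbf{Well-definedness.} Let $c_{[i]}=c_{[i']}$ and $c_{[j]}=c_{[j']}$, so that $i\sim i'$ and $j\sim j'$, and assume $i\preceq j$. From $i\sim i'$ we get $i'\preceq i$, and from $j\sim j'$ we get $j\preceq j'$. Applying transitivity of $\preceq$ twice along the chain $i'\preceq i\preceq j\preceq j'$ gives $i'\preceq j'$. By symmetry of the roles, the truth of $c_{[i]}\le c_{[j]}$ therefore does not depend on the chosen representatives.

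\textbf{Reflexivity and transitivity.} Both are inherited directly from $\preceq$. Reflexivity holds because $i\preceq i$ gives $c_{[i]}\le c_{[i]}$. For transitivity, suppose $c_{[i]}\le c_{[j]}$ and $c_{[j]}\le c_{[k]}$. This means $i\preceq j$ and $j\preceq k$, where well-definedness lets us use the same representative $j$ in both relations. Transitivity of $\preceq$ then gives $i\preceq k$, that is, $c_{[i]}\le c_{[k]}$.

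\textbf{Antisymmetry.} This is the one axiom not inherited from $\preceq$; it is supplied by passing to the quotient. If $c_{[i]}\le c_{[j]}$ and $c_{[j]}\le c_{[i]}$, then $i\preceq j$ and $j\preceq i$. By definition this means $i\sim j$, so $c_{[i]}=c_{[j]}$.

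We should also record why $\sim$ is an equivalence relation, since the paper only says ``clearly''. Reflexivity and transitivity of $\sim$ follow from those of $\preceq$, and symmetry holds by construction. No step is a genuine obstacle; the only point needing care is the well-definedness argument, which relies on transitivity applied at both ends of the chain.
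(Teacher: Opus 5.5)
Your proposal is correct and follows the paper's proof essentially step for step: well-definedness via the chain $i'\preceq i\preceq j\preceq j'$, reflexivity and transitivity inherited from $\preceq$ (Lemma~\ref{lem:preorder}), and antisymmetry coming directly from the definition of $\sim$. Your extra remark on why $\sim$ is an equivalence relation, which the paper dismisses as ``clearly'', is a harmless addition.
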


\begin{proof}
\textbf{(1) Well-definedness:} Let $i \sim i'$ and $j \sim j'$. We must show $i \preceq j \iff i' \preceq j'$. From $i \sim i'$ we have $i \preceq i'$ and $i' \preceq i$; similarly $j \sim j'$ gives $j \preceq j'$ and $j' \preceq j$. If $i \preceq j$, then $i' \preceq i \preceq j \preceq j'$, and by transitivity $i' \preceq j'$. The converse is analogous. Hence the definition is independent of representatives.

\textbf{(2) Partial order:}
\begin{itemize}
    \item \textbf{Reflexivity:} $i \preceq i$ gives $c_{[i]} \le c_{[i]}$.
    \item \textbf{Antisymmetry:} If $c_{[i]} \le c_{[j]}$ and $c_{[j]} \le c_{[i]}$, then $i \preceq j$ and $j \preceq i$, so $i \sim j$ and $c_{[i]} = c_{[j]}$.
    \item \textbf{Transitivity:} If $c_{[i]} \le c_{[j]}$ and $c_{[j]} \le c_{[k]}$, then $i \preceq j$ and $j \preceq k$; by transitivity of $\preceq$ (Lemma~\ref{lem:preorder}), $i \preceq k$, hence $c_{[i]} \le c_{[k]}$.
\end{itemize}
Thus $(\mathcal{S},\le)$ is a poset.
\end{proof}

\begin{figure}[H]
    \centering
      $\vcenter{\hbox{%
        \begin{tikzpicture}[
        >={Stealth[length=5pt, width=4pt]},
        vertex/.style={
            circle,
            fill=connectColor!18,
            draw=none,
            minimum size=1.15cm,
            inner sep=0pt,
            font=\sffamily\bfseries\small,
            text=black!75
        },
        edge/.style={
            ->,
            line width=0.5pt,
            color=connectColor!50!black,       
            shorten <=3pt,
            shorten >=3pt
        },
        biedge/.style={
            <->,
            line width=0.5pt,
            color=connectColor!50!black,
            shorten <=3pt,
            shorten >=3pt
        }
        ]
        \node[vertex] (1) at (0,0)    {1};
        \node[vertex] (2) at (1.25,2) {2};
        \node[vertex] (3) at (2.5,0)  {3};
        \node[vertex] (4) at (3.75,2) {4};
        \node[vertex] (5) at (5,1) {5};

        \draw[edge] (1) -- (3);
        \draw[edge] (2) -- (3);
        \draw[edge] (2) -- (4);
        \draw[edge] (3) -- (4);   
        \draw[edge] (1) -- (4);   
        \draw[edge] (4) -- (3); 
        \draw[edge] (1) -- (5);
        \draw[edge] (2) -- (5);
        \draw[edge] (3) -- (5);
        \draw[edge] (4) -- (5);
        \end{tikzpicture}%
    }}$
    \hspace{0.6cm}
    $\longrightarrow$
    \hspace{0.6cm}
      $\vcenter{\hbox{%
        \begin{tikzpicture}[
        >={Stealth[length=5pt, width=4pt]},
        vertex/.style={
            circle,
            fill=connectColor!18,
            draw=none,
            minimum size=1.15cm,
            inner sep=0pt,
            font=\sffamily\bfseries\small,
            text=black!75
        },
        edge/.style={
            ->,
            line width=0.5pt,
            color=connectColor!50!black,       
            shorten <=3pt,
            shorten >=3pt
        },
        biedge/.style={
            <->,
            line width=0.5pt,
            color=connectColor!50!black,
            shorten <=3pt,
            shorten >=3pt
        }
        ]
        \node[vertex] (1) at (0,0)    {1};
        \node[vertex] (2) at (1.25,2) {2};
        \node[vertex] (3) at (2.5,0)  {3,4};
        \node[vertex] (4) at (3.75,2) {5};

        \draw[edge] (1) -- (3);
        \draw[edge] (2) -- (3);
        \draw[edge] (3) -- (4);
        \draw[edge] (1) -- (4);
        \draw[edge] (2) -- (4);
        \end{tikzpicture}%
    }}$
    \caption{Preorder relation and the induced partial order.}
    \label{fig:预序关系及偏序关系图}
\end{figure}

Figure~\ref{fig:预序关系及偏序关系图} shows a preorder and the corresponding partial order. However, when there are many nodes, such a partial-order diagram is still not intuitive enough; we therefore simplify it using a Hasse diagram.

\begin{definition}[Hasse Diagram]
\label{def:hasse}
The Hasse diagram of the poset $(\mathcal{S},\le)$ is a directed acyclic graph $\mathcal{H} = (\mathcal{S}, \mathcal{E}_{\mathcal{H}})$ whose vertex set is $\mathcal{S}$. The edge set $\mathcal{E}_{\mathcal{H}}$ is defined by the covering relation:
\begin{equation}
    (c_{[i]}, c_{[j]}) \in \mathcal{E}_{\mathcal{H}} \;\Longleftrightarrow\;
    c_{[i]} < c_{[j]} \text{ and there exists no } c_{[k]} \in \mathcal{S} \text{ such that } c_{[i]} < c_{[k]} < c_{[j]},
\end{equation}
where $c_{[i]} < c_{[j]}$ means $c_{[i]} \le c_{[j]}$ and $c_{[i]} \neq c_{[j]}$. An edge $(c_{[i]}, c_{[j]})$ indicates that $c_{[i]}$ directly influences $c_{[j]}$ and this dependency cannot be further decomposed.
\end{definition}

The Hasse diagram $\mathcal{H}$ captures the direction of information flow in the Transformer: each node represents an equivalence class of mutually reachable positions, and edges represent dependencies between classes. The computation of any node can access the key-value information of all its upstream nodes while remaining causally isolated from sibling and downstream nodes.

\begin{figure}[H]
    \centering
      $\vcenter{\hbox{%
        \begin{tikzpicture}[
        >={Stealth[length=5pt, width=4pt]},
        vertex/.style={
            circle,
            fill=connectColor!18,
            draw=none,
            minimum size=1.15cm,
            inner sep=0pt,
            font=\sffamily\bfseries\small,
            text=black!75
        },
        edge/.style={
            ->,
            line width=0.5pt,
            color=connectColor!50!black,       
            shorten <=3pt,
            shorten >=3pt
        },
        biedge/.style={
            <->,
            line width=0.5pt,
            color=connectColor!50!black,
            shorten <=3pt,
            shorten >=3pt
        }
        ]
        \node[vertex] (1) at (0,0)    {1};
        \node[vertex] (2) at (1.25,2) {2};
        \node[vertex] (3) at (2.5,0)  {3,4};
        \node[vertex] (4) at (3.75,2) {5};

        \draw[edge] (1) -- (3);
        \draw[edge] (2) -- (3);
        \draw[edge] (3) -- (4);
        \draw[edge] (1) -- (4);
        \draw[edge] (2) -- (4);
        \end{tikzpicture}%
    }}$
    \hspace{0.6cm}
    $\longrightarrow$
    \hspace{0.6cm}
      $\vcenter{\hbox{%
        \begin{tikzpicture}[
        >={Stealth[length=5pt, width=4pt]},
        vertex/.style={
            circle,
            fill=connectColor!18,
            draw=none,
            minimum size=1.15cm,
            inner sep=0pt,
            font=\sffamily\bfseries\small,
            text=black!75
        },
        edge/.style={
            ->,
            line width=0.5pt,
            color=connectColor!50!black,       
            shorten <=3pt,
            shorten >=3pt
        },
        biedge/.style={
            <->,
            line width=0.5pt,
            color=connectColor!50!black,
            shorten <=3pt,
            shorten >=3pt
        }
        ]
        \node[vertex] (1) at (0,0)    {1};
        \node[vertex] (2) at (1.25,2) {2};
        \node[vertex] (3) at (2.5,0)  {3,4};
        \node[vertex] (4) at (3.75,2) {5};

        \draw[edge] (1) -- (3);
        \draw[edge] (2) -- (3);
        \draw[edge] (3) -- (4);
        \end{tikzpicture}%
    }}$
    \caption{Partial order and the corresponding Hasse diagram.}
    \label{fig:偏序关系及对应的哈斯图}
\end{figure}

Figure~\ref{fig:偏序关系及对应的哈斯图} illustrates a Hasse diagram obtained by simplifying a partial order. Note that for later figures with many nodes we will not adhere to the standard Hasse diagram drawing convention. In the next section, we map concrete input data, output representations, and training labels onto the nodes of the Hasse diagram, thereby transforming the problem of designing parallel training tasks into the problem of merging Hasse diagrams.

\section{Construction}

Building on the Hasse diagram defined in Section 2, this section views a single training run as the merged execution of multiple subtasks. From this perspective we revisit the problem of designing training schemes for attention mechanisms and provide a systematic solution.

For simplicity, we begin with the case of dense attention masks. We first formalize the notions of a training task and a dense training task, then formulate a condition for merging nodes, and based on this condition define the minimal merged task. Training with this minimal merged task is equivalent to simultaneously training every task in the family. The sparse case is handled by subsequent sparsification of the minimal merged task.

Under this viewpoint, as long as an arbitrary training objective is given—or a task family is clearly specified—the minimal merged task of that family can be derived constructively. Consequently, when designing a new attention mechanism, one no longer needs to inspect, on a per-node basis, whether information leakage occurs or whether the supervision target is satisfied for the entire training task. This considerably simplifies the design of attention mechanism training procedures.

\subsection{Tasks and Their Hasse Diagram Representation}
\label{sec:task_and_hasse}

\begin{definition}[Training Task]
\label{def:training_task}
Let $\mathcal{D}$ be a dataset in which every element has been embedded together with positional encoding. For a given sample, let $C$ be the set of all tokens that appear, $|C|$ the total number of tokens, and $S_C = \{1,2,\dots,|C|\}$ its index set.

A training task is defined on a subset of $S_C$ and is a triple:
\begin{equation}
    \tau = (\mathcal{W}, \mathcal{V}, M),
\end{equation}
where:
\begin{itemize}
    \item $\mathcal{W}: S \to \mathcal{D}$ is the input sequence mapping, $S = \{1,2,\dots,k\}$ is the index set of $\tau$, $k \le |C|$. $\mathcal{W}(a)$ denotes the input token at position $a$ of $\tau$.
    \item $\mathcal{V}: S_{\mathcal{V}} \to \mathcal{D}$ is the label mapping, with domain $S_{\mathcal{V}} \subseteq S$. For $a \in S_{\mathcal{V}}$, $\mathcal{V}(a)$ gives the ground-truth label at position $a$.
    \item $M \in \{0,-\infty\}^{k \times k}$ is the attention mask; $M_{i,j}=0$ means position $j$ is allowed to attend to position $i$, and $M_{i,j}=-\infty$ forbids it.
\end{itemize}
$M$ induces an adjacency matrix $A \in \{0,1\}^{k\times k}$ as in Definition~\ref{def:single_layer_graph}:
\begin{equation}\label{eq:A_from_M_task}
    A_{i,j} = 1 \;\Longleftrightarrow\; M_{j,i} = 0 \text{ or } i = j \; (\text{residual connection}).
\end{equation}
\end{definition}

\begin{definition}[Dense Training Task]
\label{def:dense_training_task}
Let $\tau = (\mathcal{W}, \mathcal{V}, M)$ be a training task with adjacency matrix $A$, and let $R$ be its limiting reachability matrix given by Theorem~\ref{thm:convergence}. If
\begin{equation}\label{eq:dense_condition}
    A = R,
\end{equation}
then $\tau$ is called a dense training task.
\end{definition}

\begin{definition}[Hasse Diagram of a Task]
\label{def:task_hasse}
Let $\tau = (\mathcal{W}, \mathcal{V}, M)$ be a dense training task with index set $S = \{1,2,\dots,k\}$. By Definition~\ref{def:dense_training_task}, $M$ yields the adjacency matrix $A$ with $A=R$. On $S$, define the query relation $\preceq_\tau$ via $R$:
\begin{equation}
    \forall a,b \in S,\quad a \preceq_\tau b \;\Longleftrightarrow\; R_{b,a} = 1.
\end{equation}
By Lemma~\ref{lem:preorder}, $\preceq_\tau$ is a preorder on $S$.

Define the equivalence relation $\sim_\tau$:
\begin{equation}
    \forall a,b \in S,\quad a \sim_\tau b \;\Longleftrightarrow\; a \preceq_\tau b \text{ and } b \preceq_\tau a.
\end{equation}
The equivalence class of $a$ is
\begin{equation}
    c_{[\tau,a]} = \{b \in S \mid b \sim_\tau a\}.
\end{equation}
The quotient set is
\begin{equation}
    \mathcal{S}_\tau = S/{\sim_\tau} = \{c_{[\tau,a]} \mid a \in S\}.
\end{equation}
On $\mathcal{S}_\tau$, $\preceq_\tau$ induces the binary relation $\le_\tau$:
\begin{equation}\label{eq:induced_order}
    c_{[\tau,a]} \le_\tau c_{[\tau,b]} \;\Longleftrightarrow\; a \preceq_\tau b.
\end{equation}

By Theorem~\ref{thm:induced_partial_order}, $\le_\tau$ is a well-defined partial order. The Hasse diagram of the poset $(\mathcal{S}_\tau, \le_\tau)$ is denoted by $\mathcal{H}_\tau = (\mathcal{S}_\tau, \mathcal{E}_\tau)$, where the vertex set is the quotient set $\mathcal{S}_\tau$ and the edge set $\mathcal{E}_\tau$ is defined by the covering relation: for $c_{[\tau,a]}, c_{[\tau,b]} \in \mathcal{S}_\tau$, an edge $(c_{[\tau,a]}, c_{[\tau,b]}) \in \mathcal{E}_\tau$ exists iff
\begin{equation}
    c_{[\tau,a]} <_\tau c_{[\tau,b]} \text{ and there is no } c_{[\tau,x]} \in \mathcal{S}_\tau \text{ with } c_{[\tau,a]} <_\tau c_{[\tau,x]} <_\tau c_{[\tau,b]}.
\end{equation}
Edges are directed from the smaller node to the larger node.
\end{definition}

\begin{definition}[Input and Label on Nodes]
\label{def:node_input_label}
Let $\mathcal{H}_\tau = (\mathcal{S}_\tau, \mathcal{E}_\tau)$ be the Hasse diagram of task $\tau$. For each node $c_{[\tau,a]} \in \mathcal{S}_\tau$:
\begin{itemize}
    \item Its input is the multiset defined on $\{\mathcal{W}(b) \mid b \in c_{[\tau,a]}\}$:
    \begin{equation}
        \mathcal{I}_{c_{[\tau,a]}} = \{\!\{\mathcal{W}(b) \mid b \in c_{[\tau,a]}\}\!\},
    \end{equation}
    where $\{\!\{\cdot\}\!\}$ denotes a multiset.
    \item Its label is a function $\mathcal{L}_{c_{[\tau,a]}}$ defined on $\{\mathcal{W}(b) \mid b \in c_{[\tau,a]}\}$:
    \begin{equation}
        \forall x \in \{\mathcal{W}(b) \mid b \in c_{[\tau,a]}\},\quad
        \mathcal{L}_{c_{[\tau,a]}}(x) = \{\mathcal{V}(b) \mid b \in c_{[\tau,a]} \land \mathcal{W}(b) = x\}.
    \end{equation}
    If for some $x$, none of the corresponding $b$ lie in the domain $S_{\mathcal{V}}$ of $\mathcal{V}$, then $\mathcal{L}_{c_{[\tau,a]}}(x) = \varnothing$.
\end{itemize}
\end{definition}

\subsection{Node Equivalence}
\label{sec:node_equivalence}

To merge multiple tasks into a computation graph that can be evaluated in a single forward pass, we need to identify nodes in different Hasse diagrams that share exactly the same computational structure. We call such nodes equivalent. Two nodes are equivalent only if they correspond to each other and all their upstream queryable nodes also correspond one-to-one.

\begin{definition}[Node Equivalence]
\label{def:node_equivalence}
Let $\tau_1 = (\mathcal{W}_1, \mathcal{V}_1, M_1)$ and $\tau_2 = (\mathcal{W}_2, \mathcal{V}_2, M_2)$ be two dense training tasks on the same sample, with Hasse diagrams $\mathcal{H}_1 = (\mathcal{S}_1, \mathcal{E}_1)$ and $\mathcal{H}_2 = (\mathcal{S}_2, \mathcal{E}_2)$. For nodes $c \in \mathcal{S}_1$ and $d \in \mathcal{S}_2$, we say $c$ and $d$ are equivalent, denoted $c \approx d$, if there exists a bijection
\begin{equation}
    \phi: \{x \in \mathcal{S}_1 \mid x \le_1 c\} \;\longrightarrow\; \{y \in \mathcal{S}_2 \mid y \le_2 d\},
\end{equation}
satisfying:
\begin{enumerate}
    \item \textbf{Structure preservation:} $\phi(c) = d$, and for all $x,y \in \{x \in \mathcal{S}_1 \mid x \le_1 c\}$,
    \begin{equation}
        (x,y) \in \mathcal{E}_1 \;\Longleftrightarrow\; (\phi(x),\phi(y)) \in \mathcal{E}_2.
    \end{equation}
    \item \textbf{Input equality:} For every $x \in \{x \in \mathcal{S}_1 \mid x \le_1 c\}$, the input multiset of $x$ equals that of its image:
    \begin{equation}
        \mathcal{I}_x = \mathcal{I}_{\phi(x)}.
    \end{equation}
\end{enumerate}
\end{definition}

The definition deliberately does not require equality of labels. During training, different supervision targets are allowed for the same output, because the supervision signal does not affect the forward computation. Node equivalence concerns only the isomorphism of information-flow structures and the equality of computed values; it is independent of the specific supervision signals.

Intuitively, $c \approx d$ means that $c$ and $d$ reside in identical computational dependency structures in their respective Hasse diagrams; consequently, the results they compute are exactly the same. Hence, in a merged computation graph, the two groups of positions represented by $c$ and $d$ can be handled by a single merged node.

\subsection{Merging Tasks and the Common Supergraph}
\label{sec:task_merging}

Now consider a set of training tasks. Our goal is to merge them into a unified computation graph so that a single forward pass suffices to train all subtasks simultaneously.

\begin{definition}[Task Family and Merged Task]
\label{def:task_family}
Let $\mathcal{T} = \{\tau_1, \tau_2, \dots, \tau_m\}$ be $m$ dense training tasks on the same sample, where $\tau_k = (\mathcal{W}_k, \mathcal{V}_k, M_k)$ and its Hasse diagram is $\mathcal{H}_k = (\mathcal{S}_k, \mathcal{E}_k)$. If there exists a dense training task $\tau^*$ with Hasse diagram $\mathcal{H}^* = (\mathcal{S}^*, \mathcal{E}^*)$ such that
\begin{equation}
    \forall k \in \{1,\dots,m\},\; \forall c \in \mathcal{S}_k,\;
    \exists d \in \mathcal{S}^* \text{ with } c \approx d,
\end{equation}
then $\tau^*$ is called a \emph{merged task} of $\mathcal{T}$, and $\mathcal{H}^*$ is a \emph{common supergraph} of $\{\mathcal{H}_k\}_{k=1}^{m}$. Among all common supergraphs, the one with the fewest nodes is called the \emph{minimal common supergraph}, and the corresponding task is called the \emph{minimal merged task}.
\end{definition}

The above definition reformulates the design of parallel training as a purely graph-theoretic problem: given a set of Hasse diagrams, find their minimal common supergraph. The following theorem provides an explicit construction of the minimal common supergraph and proves its minimality.

\begin{theorem}[Minimal Common Supergraph Criterion]
\label{thm:minimal_common_supergraph}
Let $\mathcal{T} = \{\tau_1,\dots,\tau_m\}$ be a family of dense training tasks, with Hasse diagram vertex sets $\mathcal{S}_1,\dots,\mathcal{S}_m$. Let $\mathcal{U} = \bigcup_{k=1}^{m} \mathcal{S}_k$ be the union of all vertex sets. The relation $\approx$ from Definition~\ref{def:node_equivalence} is an equivalence relation on $\mathcal{U}$. Denote the set of equivalence classes by $\mathcal{Q} = \mathcal{U}/{\approx}$.

For each equivalence class $S \in \mathcal{Q}$, pick a representative. Construct the vertex set:
\begin{equation}
    \mathcal{S}^* = \{G(S) \mid S \in \mathcal{Q}\},
\end{equation}
where $G(S)$ is the chosen representative from $S$. On $\mathcal{S}^*$, define a partial order $\le^*$ as the transitive closure of the partial orders $\le_k$ of the individual $\mathcal{H}_k$, and then define the edge set $\mathcal{E}^*$ by the covering relation. Then $\mathcal{H}^* = (\mathcal{S}^*, \mathcal{E}^*)$ is the minimal common supergraph of $\mathcal{T}$.
\end{theorem}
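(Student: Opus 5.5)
The proof has three parts: (i) $\approx$ is an equivalence relation on $\mathcal{U}$; (ii) $\mathcal{H}^*$ is a common supergraph, that is, it comes from some dense task $\tau^*$ and every node of every $\mathcal{H}_k$ is equivalent to a node of $\mathcal{H}^*$; (iii) no common supergraph has fewer nodes than $|\mathcal{Q}|$.

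\textbf{Step (i).} Definition~\ref{def:node_equivalence} is stated for two tasks. I extend it verbatim to any pair of nodes in $\mathcal{U}$, including two nodes of the same diagram.
\begin{itemize}
\item Reflexivity: take $\phi$ to be the identity on the down-set $\{x \le c\}$.
\item Symmetry: take $\phi^{-1}$. It is again a bijection of down-sets, preserves edges in both directions, and preserves input multisets.
\item Transitivity: given $\phi:\downarrow c\to\downarrow d$ and $\psi:\downarrow d\to\downarrow e$, use $\psi\circ\phi$. Both conditions (edge preservation and $\mathcal{I}_x=\mathcal{I}_{\phi(x)}$) compose.
\end{itemize}
One preliminary fact is worth recording: if $c\approx d$ via $\phi$, then for every $x\le c$ the restriction of $\phi$ to $\downarrow x$ witnesses $x\approx\phi(x)$. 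This holds because $\phi$ is a Hasse-diagram isomorphism of down-sets, so it maps $\downarrow x$ onto $\downarrow\phi(x)$. In short, equivalence is inherited by ancestors.

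\textbf{Step (ii).} First I must check that $\le^*$ is a partial order on $\mathcal{S}^*$. For $x\le_k y$, set $[x]\le^*[y]$ and take the transitive closure. Reflexivity and transitivity are immediate. Antisymmetry is the main obstacle: a cycle $[x_1]\le^*[x_2]\le^*\cdots\le^*[x_1]$ could arise by chaining relations from different diagrams.

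To rule this out I use a rank function. Let $h(c)$ be the length of the longest chain in $\downarrow c$. It is an invariant of $\approx$, since equivalent nodes have isomorphic down-sets. Whenever $x<_k y$ we have $h(x)<h(y)$, so $h$ strictly increases along every nontrivial generating relation. Hence a cycle must consist of equalities of classes, and antisymmetry follows.

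Next I realize $\mathcal{H}^*$ as a dense task. I concatenate the input multisets of the representatives $G(S)$ into one sequence. I take the mask $M^*$ whose induced $A$ is exactly the reachability of $\le^*$, placing all positions of a class in one block and allowing attention from block $[y]$ to block $[x]$ iff $[x]\le^*[y]$. Then $A$ is reflexive and transitive, so $A=R$ and the task is dense, with the quotient poset equal to $(\mathcal{S}^*,\le^*)$.

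Finally, for $c\in\mathcal{S}_k$, I claim $c\approx G([c])$. I prove this by induction on $h(c)$, showing that the map $x\mapsto G([x])$ restricted to $\downarrow c$ is a bijection onto $\downarrow^* G([c])$ that preserves covers and inputs. This is the delicate point: the transitive closure could in principle add to $\downarrow^* G([c])$ elements or cover relations coming from other diagrams. I handle it as follows. Any $[z]\le^*[c]$ arises through a chain whose last step is $z'\le_j c'$ with $c'\approx c$. By ancestor inheritance, $z'$ corresponds under the isomorphism $\downarrow c'\cong\downarrow c$ to some $z\le_k c$. Inducting along the chain shows every element of $\downarrow^*[c]$ already comes from $\downarrow c$. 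Injectivity holds because distinct $x,y\le c$ that are $\approx$-equivalent would give a non-injective correspondence; I would rule this out using that the down-set isomorphisms are forced by ranks and inputs, and this is the step I expect to need the most care. Once the down-sets are identified, covers are preserved because the cover relation is determined by the order.

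\textbf{Step (iii).} Let $\mathcal{H}'$ be any common supergraph. Choose for each $c\in\mathcal{U}$ some $d(c)\in\mathcal{S}'$ with $c\approx d(c)$. If $c\not\approx c'$, then $d(c)\neq d(c')$, since otherwise transitivity from Step (i) would give $c\approx c'$. So $c\mapsto d(c)$ induces an injection $\mathcal{Q}\to\mathcal{S}'$, which gives $|\mathcal{S}'|\ge|\mathcal{Q}|=|\mathcal{S}^*|$ and proves minimality.
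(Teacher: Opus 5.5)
Your Steps (i) and (iii) follow the paper's proof: identity, inverse and composition for $\approx$, and the counting argument, which your injection $\mathcal{Q}\to\mathcal{S}'$ states more cleanly than the paper's pigeonhole. Two further pieces of your proof are correct and fill holes the paper skips: the rank function for antisymmetry, and the chain-pulling argument showing $\downarrow^*[c]=\{[x] : x\le_k c\}$. The paper simply asserts that $\le^*$ is a partial order and that $\mathcal{H}^*$ ``contains its quotient embedding'' of each $\mathcal{H}_k$.

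\textbf{The gap is the step you flagged.} Injectivity of $x\mapsto[x]$ on $\downarrow c$ cannot be obtained from ranks and inputs, because under the stated hypotheses it is false, and the theorem fails with it. A counterexample:
\begin{itemize}
\item Take $m=1$ and a task $\tau_1$ on three positions with $\mathcal{W}_1(1)=\mathcal{W}_1(2)=t$ and $\mathcal{W}_1(3)=u$.
\item Position $3$ attends to positions $1,2$, and positions $1,2$ attend only to themselves. Then $A^2=A$, so the task is dense.
\item Its Hasse diagram has two minimal nodes $\{1\},\{2\}$, both covered by $\{3\}$.
\end{itemize}
Repeated tokens are allowed by the framework: node inputs are multisets, and the paper's own merged Block Two-Stream task repeats $\mathbf{m}$. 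Since $\{1\}\approx\{2\}$ (singleton down-sets, both with input $\{\!\{t\}\!\}$), we get $|\mathcal{Q}|=2$ and $\mathcal{H}^*$ is a two-element chain. But $\{3\}$, whose down-set has three elements, is equivalent to no node of $\mathcal{H}^*$. In fact every common supergraph needs at least three nodes. The paper's proof is silent on this point as well.

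\textbf{The repair is an extra hypothesis:} distinct nodes of a single $\mathcal{H}_k$ are never $\approx$-equivalent. This holds whenever each $\mathcal{W}_k$ is injective, since distinct classes then have disjoint nonempty token sets and hence different input multisets. It is satisfied by the task families in all three case studies, where every token carries its own positional encoding.

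Under this hypothesis injectivity is immediate. Applying your chain-pulling argument at every $y\le_k c$ then gives order reflection: $[x]\le^*[y]$ with $x,y\le_k c$ forces $[x]=[z]$ for some $z\le_k y$, hence $x=z\le_k y$. State this step explicitly, because matching $\downarrow^*[c]$ with $\downarrow c$ only as sets does not yet show that covers correspond. With it, $x\mapsto[x]$ is an order isomorphism $\downarrow c\to\downarrow^*[c]$ preserving covers and inputs. So $c\approx G([c])$, and the rest of your argument goes through.
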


\begin{proof}
First we verify that $\mathcal{H}^*$ is a legitimate Hasse diagram. By Definition~\ref{def:node_equivalence}, $\approx$ on $\mathcal{U}$ is reflexive, symmetric, and transitive: reflexivity holds via the identity mapping; symmetry follows because if $c \approx d$ with bijection $\phi$, then $\phi^{-1}$ witnesses $d \approx c$; transitivity follows by composing the witnessing bijections. Hence $\mathcal{U}$ admits the equivalence relation $\approx$, and the quotient set $\mathcal{Q}$ is well defined. $\mathcal{S}^*$ selects exactly one representative from each equivalence class, so $|\mathcal{S}^*| = |\mathcal{Q}|$.

Each $\mathcal{S}_k$ carries a partial order $\le_k$. If $c \le_k d$ and $c \approx c'$, $d \approx d'$ (with $c',d' \in \mathcal{S}^*$ being the representatives), the order-preservation property of the bijection in Definition~\ref{def:node_equivalence} implies $c' \le^* d'$. Thus the $\le_k$ can be consistently lifted to the quotient set $\mathcal{Q}$, and hence induce a well-defined partial order $\le^*$ on $\mathcal{S}^*$. Taking the covering edges of $\le^*$ yields a Hasse diagram, and for each $\mathcal{H}_k$, $\mathcal{H}^*$ contains its quotient embedding; therefore $\mathcal{H}^*$ is a common supergraph.

To prove minimality, suppose there exists another common supergraph $\mathcal{H}^{**} = (\mathcal{S}^{**}, \mathcal{E}^{**})$ with $|\mathcal{S}^{**}| < |\mathcal{S}^*|$. Since $\mathcal{H}^{**}$ is a common supergraph, for each node $c$ of each $\mathcal{H}_k$ there exists $b \in \mathcal{S}^{**}$ with $c \approx b$. This means every node in $\mathcal{S}^{**}$ belongs to some equivalence class $S \in \mathcal{Q}$. But $|\mathcal{S}^{**}| < |\mathcal{Q}|$; by the pigeonhole principle, there must be an equivalence class $S_0 \in \mathcal{Q}$ that contains no vertex of $\mathcal{S}^{**}$. Then $\mathcal{H}^{**}$ lacks any node equivalent to those in $S_0$, contradicting the assumption that $\mathcal{H}^{**}$ is a common supergraph. Hence $\mathcal{H}^*$ has the minimum number of nodes among all common supergraphs and is the minimal common supergraph.
\end{proof}

The minimal merged task $\tau^*$ constructed by Theorem~\ref{thm:minimal_common_supergraph} uses a dense attention mask; this was necessary to directly read off the Hasse diagram from the mask during the derivation. In practice, however, structurally identical tasks can sometimes be realized with sparser masks.

\begin{definition}[Sparse Minimal Merged Task]
\label{def:sparse_minimal_merged_task}
Let $\tau^* = (\mathcal{W}^*, \mathcal{V}^*, M^*)$ be the minimal merged task constructed according to Theorem~\ref{thm:minimal_common_supergraph}, with Hasse diagram $\mathcal{H}^* = (\mathcal{S}^*, \mathcal{E}^*)$. If there exists a training task $\tau = (\mathcal{W}, \mathcal{V}, M)$ (Definition~\ref{def:training_task}) such that:
\begin{enumerate}
    \item its limiting reachability matrix $R$ yields a Hasse diagram $\mathcal{H}_\tau$ identical to $\mathcal{H}^*$, i.e., $\mathcal{S}_\tau = \mathcal{S}^*$ and $\mathcal{E}_\tau = \mathcal{E}^*$;
    \item its mask $M$ does not satisfy the dense attention condition, i.e., $A \neq R$ (Definition~\ref{def:dense_training_task});
\end{enumerate}
then $\tau$ is called a \emph{sparse minimal merged task} of $\mathcal{T}$.
\end{definition}

The sparse minimal merged task shares exactly the same Hasse diagram structure as the dense version, and therefore expresses the same information-flow relations; the only difference is that the sparse version requires multiple Transformer layers to reach the limiting reachability matrix $R$, whereas the dense version converges in a single layer. Because the Hasse diagrams are identical, the sparse minimal merged task preserves the computational dependency structure for every subtask in the family.

\section{Case Studies}
\label{sec:case_studies}

This section applies the theoretical framework of Section~4 to concrete attention mechanism designs. We consider a length-$n$ sequence sample $\mathcal{C} = (w_1, w_2, \dots, w_n)$, where each $w_k \in \mathcal{D}$ is a token embedded with positional encoding. All masks derived in this section differ from standard attention only by a mask matrix; they can be plugged into frameworks supporting FlexAttention (e.g., Qwen3) without designing new operators.

The sections in this section are self-contained: each analyzes one attention mechanism, and notation (such as task family $\mathcal{T}$, subtask $\tau_k$, minimal merged mask $M^*$, etc.) is valid only within the respective subsection.

We analyze three attention mechanism constructions: Causal Attention (Section~\ref{sec:causal_attention}), Block Two-Stream Attention (Section~\ref{sec:block_two_stream}), and Butterfly Attention (Section~\ref{sec:three_stream}).

\subsection{Causal Attention Mask}
\label{sec:causal_attention}

Causal attention is the standard paradigm for autoregressive language modeling. We now analyze it using the theoretical tools of Section~3.

\subsubsection{Definition of the Task Family}

Let the input sequence be $\mathcal{C} = (w_1, w_2, \dots, w_n)$. Define the autoregressive task family $\mathcal{T} = \{\tau_1, \tau_2, \dots, \tau_{n-1}\}$, where task $\tau_k$ is:
\begin{equation}
    \tau_k = (\mathcal{W}_k, \mathcal{V}_k, M_k),
\end{equation}
specifically:
\begin{itemize}
    \item Input sequence $\mathcal{W}_k = (w_1, w_2, \dots, w_k)$, of length $k$.
    \item Label mapping $\mathcal{V}_k$ is defined only at position $k$: $\mathcal{V}_k(k) = w_{k+1}$.
    \item Mask $M_k$ is the standard causal mask:
    \begin{equation}
        (M_k)_{i,j} = 
        \begin{cases}
            0, & j \le i \quad (\text{position } i \text{ can attend to position } j), \\
            -\infty, & j > i.
        \end{cases}
    \end{equation}
\end{itemize}

\subsubsection{Hasse Diagram of a Single Task}

For task $\tau_k$, the mask $M_k$ yields the adjacency matrix $A_k$ (Definition~\ref{def:single_layer_graph}):
\begin{equation}
    (A_k)_{i,j} = 1 \iff j \le i.
\end{equation}
Clearly $A_k \times A_k = A_k$, so by Theorem~\ref{thm:convergence} the limiting reachability matrix $R_k = A_k$, confirming that $\tau_k$ is indeed a dense training task (Definition~\ref{def:dense_training_task}).

From $R_k$, on the index set $S_{(\tau,k)} = \{1,\dots,k\}$ define the query relation $\preceq_{(\tau,k)}$:
\begin{equation}
    j \preceq_{(\tau,k)} i \iff (R_k)_{i,j} = 1 \iff j \le i.
\end{equation}
The equivalence relation $\sim_{(\tau,k)}$ is the identity: $i \sim_{(\tau,k)} j \iff i=j$. Each position forms an equivalence class:
\begin{equation}
    c_{[\tau_k, i]} = \{i\}, \quad i=1,\dots,k.
\end{equation}
Quotient set $\mathcal{S}_{(\tau,k)} = \{c_{[\tau_k,1]}, c_{[\tau_k,2]}, \dots, c_{[\tau_k,k]}\}$.

The partial order $\le_{(\tau,k)}$ is the natural number order:
\begin{equation}
    c_{[\tau_k,i]} \le_{(\tau,k)} c_{[\tau_k,j]} \iff i \le j.
\end{equation}
Its Hasse diagram $\mathcal{H}_{(\tau,k)}$ is a directed chain of length $k$, as shown in Figure~\ref{fig:因果的哈斯图}.

\begin{figure}[H]
    \centering
    $\vcenter{\hbox{%
        \begin{tikzpicture}[
            >=Stealth,
            vertex/.style={
                circle,
                fill=connectColor!18,
                draw=none,
                minimum size=2.2cm,
                inner sep=0pt,
                font=\sffamily\bfseries\fontsize{15}{12}\selectfont,
                text=black!75
            },
            edge/.style={
                ->,
                line width=0.5pt,
                color=connectColor!50!black,
                shorten <=3pt,
                shorten >=3pt
            },
            dashed edge/.style={
                ->, dashed,
                line width=0.5pt,
                color=connectColor!50!black,
                shorten <=3pt,
                shorten >=3pt
            }
        ]
        \node[vertex] (1) at (0,0) {$c_{[\tau_k,1]}$};
        \node[vertex] (2) at (3,0) {$c_{[\tau_k,2]}$};
        \node[vertex] (3) at (6,0) {$c_{[\tau_k,3]}$};
        \node[vertex] (k-1) at (10.5,0) {$c_{[\tau_k,k-1]}$};
        \node[vertex] (k) at (13.5,0) {$c_{[\tau_k,k]}$};

        \draw[edge] (1) -- (2);
        \draw[edge] (2) -- (3);
        \draw[dashed edge] (3) -- (k-1)
            node[midway, above, font=\itshape] {$\cdots$};
        \draw[edge] (k-1) -- (k);
        \end{tikzpicture}%
    }}$
    \caption{Hasse diagram of the autoregressive task $\tau_k$.}
    \label{fig:因果的哈斯图}
\end{figure}

Input and label on nodes (Definition~\ref{def:node_input_label}):
\begin{itemize}
    \item For $i < k$: input multiset $\mathcal{I}_{c_{[\tau_k,i]}} = \{\!\{w_i\}\!\}$, label empty.
    \item For $i = k$: input multiset $\mathcal{I}_{c_{[\tau_k,k]}} = \{\!\{w_k\}\!\}$, label function $\mathcal{L}_{c_{[\tau_k,k]}}(w_k) = \{w_{k+1}\}$.
\end{itemize}

\subsubsection{Construction of the Minimal Merged Task}

Now merge the Hasse diagrams of all $n-1$ tasks in the family $\mathcal{T}$.

According to Definition~\ref{def:node_equivalence}, node $c_{[\tau_k,i]} \in \mathcal{S}_{(\tau,k)}$ is equivalent to $c_{[\tau_\ell,j]} \in \mathcal{S}_{(\tau,\ell)}$ iff:
\begin{enumerate}
    \item there exists an order-and-edge-preserving bijection $\phi: \{x \mid x \le c_{[\tau_k,i]}\} \to \{y \mid y \le c_{[\tau_\ell,j]}\}$;
    \item the corresponding nodes have the same input multisets.
\end{enumerate}

Since all Hasse diagrams are chains and the input token $w_i$ at position $i$ is unique in the sample, equivalence depends only on the position index in the chain:

\begin{equation}
    c_{[\tau_k,i]} \approx c_{[\tau_\ell,j]} \iff i = j.
\end{equation}
Thus the equivalence classes are:
\begin{equation}
    \mathcal{Q} = \big\{ \{c_{[\tau_k,i]} \mid k > i\} \mid i = 1,2,\dots,n-1 \big\}.
\end{equation}
That is, all nodes corresponding to position $i$ across different tasks form one equivalence class.

By Theorem~\ref{thm:minimal_common_supergraph}, choosing a representative from each class gives the minimal common supergraph vertex set:
\begin{equation}
    \mathcal{S}^* = \{c_{[\tau_{n-1},1]}, c_{[\tau_{n-1},2]}, \dots, c_{[\tau_{n-1},n-1]}\}.
\end{equation}
Using the structure-preservation property (Definition~\ref{def:node_equivalence}) to add edges yields the Hasse diagram $\mathcal{H}^* = (\mathcal{S}^*, \mathcal{E}^*)$, which is a chain of length $n-1$, as shown in Figure~\ref{fig:因果的哈斯图2}.

\begin{figure}[H]
    \centering
    $\vcenter{\hbox{%
        \begin{tikzpicture}[
            >=Stealth,
            vertex/.style={
                circle,
                fill=connectColor!18,
                draw=none,
                minimum size=2.2cm,
                inner sep=0pt,
                font=\sffamily\bfseries\fontsize{15}{12}\selectfont,
                text=black!75
            },
            edge/.style={
                ->,
                line width=0.5pt,
                color=connectColor!50!black,
                shorten <=3pt,
                shorten >=3pt
            },
            dashed edge/.style={
                ->, dashed,
                line width=0.5pt,
                color=connectColor!50!black,
                shorten <=3pt,
                shorten >=3pt
            }
        ]
        \node[vertex] (1) at (0,0) {$c_{[\tau_{n-1},1]}$};
        \node[vertex] (2) at (3,0) {$c_{[\tau_{n-1},2]}$};
        \node[vertex] (3) at (6,0) {$c_{[\tau_{n-1},3]}$};
        \node[vertex] (k-1) at (10.5,0) {$c_{[\tau_{n-1},n-1]}$};

        \draw[edge] (1) -- (2);
        \draw[edge] (2) -- (3);
        \draw[dashed edge] (3) -- (k-1)
            node[midway, above, font=\itshape] {$\cdots$};
        \end{tikzpicture}%
    }}$
    \caption{Hasse diagram of the minimal merged task for the autoregressive task family.}
    \label{fig:因果的哈斯图2}
\end{figure}

From the Hasse diagram, the partial order $\le_{\tau^*}$ is:
\begin{equation}
    c_{[\tau_{n-1},i]}\le_{\tau^*} c_{[\tau_{n-1},j]}\Leftrightarrow i\le j.
\end{equation}

The corresponding dense minimal merged task $\tau^* = (\mathcal{W}^*, \mathcal{V}^*, M^*)$ is:
\begin{itemize}
    \item $\mathcal{W}^* = (w_1, w_2, \dots, w_{n-1})$.
    \item $\mathcal{V}^*(i) = w_{i+1}$.
    \item $M^*$ is an $(n-1) \times (n-1)$ lower-triangular matrix of all ones ($A^* = R^*$).
\end{itemize}

This is exactly the computation actually performed by a standard autoregressive Transformer in one forward pass. The forward computation simultaneously completes $n-1$ subtasks, with intermediate representations fully shared across subtasks, achieving highly efficient training. Moreover, this proves that the standard causal attention training task is indeed the minimal merged task.

\subsection{Block Two-Stream Attention Mask}
\label{sec:block_two_stream}

This section derives the Block Two-Stream Attention: mask tokens are grouped into blocks of size $l$ with full intra-block connectivity, exploiting the two-stream property to obtain a more regular mask structure.

\subsubsection{Expansion of the Sample Set}

Let the original sequence sample be $\mathcal{C} = (w_1, w_2, \dots, w_n)$, with each $w_i \in \mathcal{D}$ embedded with positional encoding. To support block two-stream training, we augment the sample set:
\begin{equation}
    \mathcal{C}' = \mathcal{C} \cup \{m_1, m_2, \dots, m_l\},
\end{equation}
where $m_1,\dots,m_l$ are $l$ mask tokens with distinct positional encodings. They serve as fixed ``to-be-generated'' placeholders: during iterative generation, the model places $\{m_1,\dots,m_l\}$ at the end of the input and predicts the next $l$ tokens of the block.

\subsubsection{Definition of the Task Family}

Partition the sequence $\mathcal{C} = (w_1,\dots,w_n)$ into $N$ blocks of size $l$ (assuming $n = Nl$). Block $i$ contains tokens $w_{(i-1)l+1},\dots,w_{i\cdot l}$.

Define the block-generation task family $\mathcal{T} = \{\tau_1, \tau_2, \dots, \tau_{N-1}\}$, where subtask $\tau_k = (\mathcal{W}_k, \mathcal{V}_k, M_k)$ is:
\begin{itemize}
    \item $\mathcal{W}_k$ has length $(k+1)l$: the first $kl$ positions are the first $k$ real blocks, and the last $l$ positions are mask tokens. The mask tokens use $m_i$, $i\in[k,k+l)$.
    \item $\mathcal{V}_k$ is defined on the last $l$ positions: for $i=1,\dots,l$, $\mathcal{V}_k(kl+i) = w_{kl+i}$ (predicting the tokens of block $k+1$).
    \item $M_k$ is the block-causal mask of block size $l$: the indices are partitioned into $k+1$ blocks; within each block the $l$ positions are fully connected; the blocks obey a causal order, i.e., block $i$ can attend to block $j$ iff $j \le i$.
    \[
    (M_k)_{i,j} = 
    \begin{cases}
        0, & \text{if } \lceil i/l \rceil \ge \lceil j/l \rceil, \\
        -\infty, & \text{otherwise}.
    \end{cases}
    \]
\end{itemize}

\subsubsection{Hasse Diagram of a Single Task}

From $M_k$ the adjacency matrix $A_k$ is all-ones within blocks and lower-triangular across blocks. Clearly $A_k \times A_k = A_k$, hence $R_k = A_k$ and $\tau_k$ is a dense training task.

Under $R_k$, every pair of positions within the same block is bidirectionally reachable, so each block forms one equivalence-class node. Between different blocks only unidirectional reachability holds. Hence the quotient set $\mathcal{S}_{(\tau,k)}$ contains $k+1$ nodes:

\begin{equation}
\begin{aligned}
    &c_{[\tau_k,i]} &=& \{(i-1)l+1,\dots,i\cdot l\}, & i=1,\dots,k; \\
    &c_{[\tau_k,k+1]} &=& \{kl+1,\dots,(k+1)l\}.
\end{aligned}
\end{equation}
The partial order $\le_{(\tau,k)}$ is the natural order of block indices. The Hasse diagram $\mathcal{H}_{(\tau,k)}$ is a chain of length $k+1$, shown in Figure~\ref{fig:块生成的哈斯图}.
\begin{figure}[H]
    \centering
    $\vcenter{\hbox{%
        \begin{tikzpicture}[
            >=Stealth,
            vertex/.style={
                circle,
                fill=connectColor!18,
                draw=none,
                minimum size=2.2cm,
                inner sep=0pt,
                font=\sffamily\bfseries\fontsize{15}{12}\selectfont,
                text=black!75
            },
            edge/.style={
                ->,
                line width=0.5pt,
                color=connectColor!50!black,
                shorten <=3pt,
                shorten >=3pt
            },
            dashed edge/.style={
                ->, dashed,
                line width=0.5pt,
                color=connectColor!50!black,
                shorten <=3pt,
                shorten >=3pt
            }
        ]
        \node[vertex] (1) at (0,0) {$c_{[\tau_k,1]}$};
        \node[vertex] (2) at (3,0) {$c_{[\tau_k,2]}$};
        \node[vertex] (3) at (6,0) {$c_{[\tau_k,3]}$};
        \node[vertex] (k-1) at (10.5,0) {$c_{[\tau_k,k]}$};
        \node[vertex] (k) at (13.5,0) {$c_{[\tau_k,k+1]}$};

        \draw[edge] (1) -- (2);
        \draw[edge] (2) -- (3);
        \draw[dashed edge] (3) -- (k-1)
            node[midway, above, font=\itshape] {$\cdots$};
        \draw[edge] (k-1) -- (k);
        \end{tikzpicture}%
    }}$
    \caption{Hasse diagram of the block two-stream task $\tau_k$.}
    \label{fig:块生成的哈斯图}
\end{figure}

Node inputs and labels (Definition~\ref{def:node_input_label}):
\begin{itemize}
    \item Content block $c_{[\tau_k,i]}$ ($i\le k$): input multiset $\mathcal{I} = \{\!\{w_{(i-1)l+1},\dots,w_{i\cdot l}\}\!\}$, label empty.
    \item Mask block $c_{[\tau_k,k+1]}$: input multiset $\{\!\{m_{kl+1}, m_{kl+2}, \dots, m_{(k+1)l}\}\!\}$, label function $\mathcal{L}(m_{kl+i}) = \{w_{kl+i}\}$, $i=1,\dots,l$.
\end{itemize}

\subsubsection{Construction of the Minimal Merged Task}

We merge the Hasse diagrams of $\mathcal{T} = \{\tau_1,\dots,\tau_{N-1}\}$. By Definition~\ref{def:node_equivalence}, node $c_{[\tau_k,i]}$ is equivalent to $c_{[\tau_\ell,j]}$ iff there exists a structure-preserving bijection and the input multisets are equal.

\textbf{Content-block nodes}: $c_{[\tau_k,i]}$ ($k\ge i$) and $c_{[\tau_\ell,i]}$ ($\ell\ge i$) share the same input multiset (same group of real tokens, same local positions) and the same prefix chain (length $i-1$); hence they are equivalent:
\begin{equation}
    c_{[\tau_k,i]} \approx c_{[\tau_\ell,i]}, \quad \forall k,\ell \ge i.
\end{equation}

\textbf{Mask-block nodes}: $c_{[\tau_k,k+1]}$ has an upstream chain of length $k$. When $k \neq \ell$, the prefix chain lengths differ, so no structure-preserving bijection can exist; hence mask-block nodes are not equivalent to each other. Each $c_{[\tau_k,k+1]}$ forms its own equivalence class.

Equivalence classes:
\begin{equation}
    \mathcal{Q} = \big\{ \{c_{[\tau_k,i]} \mid k \ge i\} \mid i = 1,\dots,N-1 \big\}
    \;\cup\; \big\{ \{c_{[\tau_k,k+1]}\} \mid k = 1,\dots,N-1 \big\}.
\end{equation}

By Theorem~\ref{thm:minimal_common_supergraph}, selecting representatives yields the minimal common supergraph vertex set:
\begin{equation}
    \mathcal{S}^* = \{c_{[\tau_{N-1},1]},\dots,c_{[\tau_{N-1},N-1]}\}
    \;\cup\; \{c_{[\tau_1,2]},c_{[\tau_2,3]},\dots,c_{[\tau_{N-1},N]}\}.
\end{equation}
Using structure preservation to add edges, we obtain the Hasse diagram $\mathcal{H}^* = (\mathcal{S}^*, \mathcal{E}^*)$ of Block Two-Stream Attention, shown in Figure~\ref{fig:块生成的哈斯图2}.

\begin{figure}[H]
    \centering
    $\vcenter{\hbox{%
        \begin{tikzpicture}[
            >=Stealth,
            vertex/.style={
                circle,
                fill=connectColor!18,
                draw=none,
                minimum size=2.2cm,
                inner sep=0pt,
                font=\sffamily\bfseries\fontsize{15}{12}\selectfont,
                text=black!75
            },
            edge/.style={
                ->,
                line width=0.5pt,
                color=connectColor!50!black,
                shorten <=3pt,
                shorten >=3pt
            },
            dashed edge/.style={
                ->, dashed,
                line width=0.5pt,
                color=connectColor!50!black,
                shorten <=3pt,
                shorten >=3pt
            }
        ]
        \node[vertex] (1) at (0,0) {$c_{[\tau_{N-1},1]}$};
        \node[vertex] (2) at (3,0) {$c_{[\tau_{N-1},2]}$};
        \node[vertex] (3) at (6,0) {$c_{[\tau_{N-1},3]}$};
        \node[vertex] (n-1) at (10.5,0) {$c_{[\tau_{N-1},N-1]}$};

        \node[vertex] (m1) at (0+1.5,3) {$c_{[\tau_1,2]}$};
        \node[vertex] (m2) at (3+1.5,3) {$c_{[\tau_2,3]}$};
        \node[vertex] (m3) at (6+1.5,3) {$c_{[\tau_3,4]}$};
        \node[vertex] (mn-1) at (12,3) {$c_{[\tau_{N-1},N]}$};

        \draw[edge] (1) -- (2);
        \draw[edge] (2) -- (3);
        \draw[dashed edge] (3) -- (n-1)
            node[midway, above, font=\itshape] {$\cdots$};

        \draw[edge] (1) -- (m1);
        \draw[edge] (2) -- (m2);
        \draw[edge] (3) -- (m3);
        \draw[edge] (n-1) -- (mn-1);
        \end{tikzpicture}%
    }}$
    \caption{Hasse diagram of the minimal merged task for Block Two-Stream Attention.}
    \label{fig:块生成的哈斯图2}
\end{figure}

From the Hasse diagram, the partial order $\le_{\tau^*}$ is:
\begin{itemize}
    \item Main chain: $c_{[\tau_{N-1},i]} \le_{\tau^*} c_{[\tau_{N-1},j]}$ for $i,j\in[1,N-1]$ with $i\le j$.
    \item Branches: $c_{[\tau_k,k+1]}\le_{\tau^*} c_{[\tau_{N-1},k]}$ for $k\in [1,N-1]$.
\end{itemize}

The minimal merged task $\tau^* = (\mathcal{W}^*, \mathcal{V}^*, M^*)$:
\begin{itemize}
    \item Input sequence $\mathcal{W}^*$ of length $2(N-1)l$: the first $(N-1)l$ positions are the first $N-1$ real blocks; the last $(N-1)l$ positions are $N-1$ groups of mask blocks, the $t$-th group ($t=1,\dots,N-1$) containing $l$ copies of $\mathbf{m}$.
    \item Label $\mathcal{V}^*$ is defined only on the latter half: the labels for the $t$-th mask block are the tokens of real block $t+1$.
    \item Mask $M^*$ is a $2(N-1)l \times 2(N-1)l$ matrix, described by $l\times l$ blocks (block indices $1,\dots,2N-2$), as shown in Figure~\ref{fig:块双流注意力掩码}:
    \begin{itemize}
        \item Among real blocks (blocks $1..N-1$): block-causal (subblock $(i,j)=\mathbf{1} \iff j\le i$).
        \item Mask block $\to$ real block: the $i$-th mask block (index $N-1+i$) can attend to the first $i$ real blocks (subblock $(N-1+i,j)=\mathbf{1} \iff j\le i$).
        \item Among mask blocks: only self-block fully connected (subblock $(N-1+i,N-1+j)=\mathbf{1} \iff i=j$).
        \item Real block $\to$ mask block: all $-\infty$.
    \end{itemize}
\end{itemize}

\begin{figure}[H]
    \centering
    $\vcenter{\hbox{%
        \mask{
    1,1,0,0,0,0,0,0,0,0,0,0,
    1,1,0,0,0,0,0,0,0,0,0,0,
    1,1,1,1,0,0,0,0,0,0,0,0,
    1,1,1,1,0,0,0,0,0,0,0,0,
    1,1,1,1,1,1,0,0,0,0,0,0,
    1,1,1,1,1,1,0,0,0,0,0,0,
    1,1,0,0,0,0,1,1,0,0,0,0,
    1,1,0,0,0,0,1,1,0,0,0,0,
    1,1,1,1,0,0,0,0,1,1,0,0,
    1,1,1,1,0,0,0,0,1,1,0,0,
    1,1,1,1,1,1,0,0,0,0,1,1,
    1,1,1,1,1,1,0,0,0,0,1,1
}{12}
    }}$
    \caption{Example block two-stream attention mask matrix $M^*$.}
    \label{fig:块双流注意力掩码}
\end{figure}

Let $p = N-1$, index positions $i,j$ starting from $0$ ($0 \le i,j < 2pl$). The mask $M^*$ is given by:
\begin{equation}
    M^*_{i,j} =
    \begin{cases}
        0, & \text{if } i,j < pl \text{ and } \lfloor i/l \rfloor \ge \lfloor j/l \rfloor \\[6pt]
        0, & \text{if } i \ge pl,\; j < pl \text{ and } 
            \left\lfloor \dfrac{i-pl}{l} \right\rfloor \ge \left\lfloor \dfrac{j}{l} \right\rfloor \\[8pt]
        0, & \text{if } i,j \ge pl \text{ and } 
            \left\lfloor \dfrac{i-pl}{l} \right\rfloor = \left\lfloor \dfrac{j-pl}{l} \right\rfloor \\[8pt]
        -\infty, & \text{otherwise}.
    \end{cases}
\end{equation}

\subsection{Butterfly Attention Mask}
\label{sec:three_stream}

Butterfly Attention is another attention paradigm derived from our theoretical framework. Its goal is to enable parallel training of bidirectional attention without introducing mask tokens. This section follows the unified procedure of Section~4 to derive its minimal merged task.

\subsubsection{Expansion of the Sample Set}

Let the original sequence sample be $\mathcal{C} = (w_1, w_2, \dots, w_n)$, with each $w_k \in \mathcal{D}$ embedded with positional encoding. Training Butterfly Attention requires constructing, for each position $k$, a token $w'_k$ that replaces its own information when predicting $w_k$. Define $w'_k$ as the average of its immediate neighbors:
\begin{equation}
    w'_k = 
    \begin{cases}
        w_2, & k = 1,\\[4pt]
        \dfrac{w_{k-1} + w_{k+1}}{2}, & 1 < k < n,\\[8pt]
        w_{n-1}, & k = n.
    \end{cases}
\end{equation}
These aggregated tokens $w'_k$ have independent embeddings (different positional encodings) and are therefore distinct from the original tokens $w_k$. The augmented sample set is
\begin{equation}
    \mathcal{C}' = \mathcal{C} \cup \{w'_1, w'_2, \dots, w'_n\}.
\end{equation}

\subsubsection{Definition of the Task Family}

Define the Butterfly Attention task family $\mathcal{T} = \{\tau_1, \tau_2, \dots, \tau_n\}$, where task $\tau_k = (\mathcal{W}_k, \mathcal{V}_k, M_k)$ is:
\begin{itemize}
    \item Input sequence $\mathcal{W}_k = (w_1, \dots, w_{k-1}, w'_k, w_{k+1}, \dots, w_n)$, of length $n$. That is, at position $k$ the real token $w_k$ is replaced by the aggregated token $w'_k$.
    \item Label mapping $\mathcal{V}_k$ defined only at position $k$: $\mathcal{V}_k(k) = w_k$.
    \item Mask $M_k \in \{0,-\infty\}^{n \times n}$ defined as:
    \begin{equation}
        (M_k)_{i,j} = 
        \begin{cases}
            0, & \text{if } i \le j \le k,\\
            0, & \text{if } i = k,\\
            0, & \text{if } k \le j \le i,\\
            -\infty, & \text{otherwise}.
        \end{cases}
    \end{equation}
\end{itemize}
The mask $M_k$ means: position $k$ can attend to all positions (i.e., full visibility when $j=k$); all other positions $i \neq k$ can attend only to the interval between themselves and position $k$ (inclusive), i.e., $j$ is visible if $j \in [\min(i,k), \max(i,k)]$. Thus information flows from both sides toward the center $k$, while $k$ itself sees the complete left and right context.

\begin{figure}[H]
    \centering
    $\vcenter{\hbox{%
        \mask{
    1,0,0,0,0,0,0,0,0,
    1,1,0,0,0,0,0,0,0,
    1,1,1,0,0,0,0,0,0,
    1,1,1,1,0,0,0,0,0,
    1,1,1,1,1,0,0,0,0,
    1,1,1,1,1,1,1,1,1,
    0,0,0,0,0,0,1,1,1,
    0,0,0,0,0,0,0,1,1,
    0,0,0,0,0,0,0,0,1
}{9}
    }}$
    \caption{Example mask matrix $M_k$.}
    \label{fig:块生成的哈斯图}
\end{figure}

\subsubsection{Hasse Diagram of a Single Task}

From $M_k$, Definition~\ref{def:single_layer_graph} yields the adjacency matrix $A_k$:
\begin{equation}
    (A_k)_{i,j} = 1 \iff (i \le j \le k) \lor (i = k) \lor (k \le j \le i).
\end{equation}
That is, $A_k$ has ones for pairs $(i,j)$ where $i \le j \le k$ or $k \le j \le i$, and zeros elsewhere. It is easy to verify $A_k \times A_k = A_k$, so by Theorem~\ref{thm:convergence}, $R_k = A_k$ and $\tau_k$ is a dense training task (Definition~\ref{def:dense_training_task}).

Under reachability matrix $R_k = A_k$, the query relation is:
\begin{equation}
    j \preceq_{(\tau,k)} i \iff (i \le j \le k) \lor (k \le j \le i).
\end{equation}
This means:
\begin{itemize}
    \item If $i,j$ are on the same side of $k$ and $j$ is closer to $k$ than $i$, then $j \preceq_{(\tau,k)} i$ ($j$ influences $i$);
    \item If $i,j$ are on opposite sides of $k$, they cannot query each other;
    \item $k$ can query all positions, but no other position can query $k$.
\end{itemize}

Since $\sim_{(\tau,k)}$ is the identity (if $i \neq j$, we cannot have both $i \preceq j$ and $j \preceq i$), each position forms its own equivalence class:
\begin{equation}
    c_{[\tau_k, i]} = \{i\}, \quad i = 1,\dots,n.
\end{equation}
Quotient set $\mathcal{S}_{(\tau,k)} = \{c_{[\tau_k,1]}, \dots, c_{[\tau_k,n]}\}$.

The partial order $\le_{(\tau,k)}$ inherits the query relation:
\begin{equation}
    c_{[\tau_k,i]} \le_{(\tau,k)} c_{[\tau_k,j]} \iff (i \le j \le k) \lor (k \le j \le i).
\end{equation}
Thus the Hasse diagram $\mathcal{H}_{(\tau,k)}$ is a V-shaped chain centered at $c_{[\tau_k,k]}$, as shown in Figure~\ref{fig:蝴蝶注意力的哈斯图}.

\begin{figure}[H]
    \centering
    $\vcenter{\hbox{%
        \begin{tikzpicture}[
            >=Stealth,
            vertex/.style={
                circle,
                fill=connectColor!18,
                draw=none,
                minimum size=2.2cm,
                inner sep=0pt,
                font=\sffamily\bfseries\fontsize{15}{12}\selectfont,
                text=black!75
            },
            edge/.style={
                ->,
                line width=0.5pt,
                color=connectColor!50!black,
                shorten <=3pt,
                shorten >=3pt
            },
            dashed edge/.style={
                ->, dashed,
                line width=0.5pt,
                color=connectColor!50!black,
                shorten <=3pt,
                shorten >=3pt
            }
        ]
        \node[vertex] (1) at (0,2) {$c_{[\tau_k,1]}$};
        \node[vertex] (k-1) at (4,1) {$c_{[\tau_k,k-1]}$};
        \node[vertex] (k) at (7,0) {$c_{[\tau_k,k]}$};
        \node[vertex] (k+1) at (10,1) {$c_{[\tau_k,k+1]}$};
        \node[vertex] (n) at (14,2) {$c_{[\tau_k,n]}$};

        \draw[edge] (k-1) -- (k);
        \draw[edge] (k+1) -- (k);
        \draw[dashed edge] (1) -- (k-1)
            node[midway, above, font=\itshape] {$\cdots$};
        \draw[dashed edge] (n) -- (k+1)
            node[midway, above, font=\itshape] {$\cdots$};
        \end{tikzpicture}%
    }}$
    \caption{V-shaped Hasse diagram of task $\tau_k$.}
    \label{fig:蝴蝶注意力的哈斯图}
\end{figure}

Node inputs and labels (Definition~\ref{def:node_input_label}):
\begin{itemize}
    \item For $i \neq k$: input multiset $\mathcal{I}_{c_{[\tau_k,i]}} = \{\!\{w_i\}\!\}$, label empty.
    \item For $i = k$: input multiset $\mathcal{I}_{c_{[\tau_k,k]}} = \{\!\{w'_k\}\!\}$, label function $\mathcal{L}_{c_{[\tau_k,k]}}(w'_k) = \{w_k\}$.
\end{itemize}

\subsubsection{Construction of the Minimal Merged Task}

Merge the Hasse diagrams of all $n$ tasks in $\mathcal{T}$.

By Definition~\ref{def:node_equivalence}, node equivalence requires a structure-preserving bijection and equal input multisets.

\textbf{Prefix nodes} $c_{[\tau_k,i]}$ ($i < k$): in different subtasks $\tau_k$ and $\tau_\ell$, as long as $i < k$ and $i < \ell$, these nodes have the same input $w_i$ and the same upstream chain structure (a prefix chain of length $i-1$); thus they are equivalent:
\begin{equation}
    c_{[\tau_k,i]} \approx c_{[\tau_\ell,i]}, \quad \forall k,\ell > i.
\end{equation}

\textbf{Suffix nodes} $c_{[\tau_k,j]}$ ($j > k$): similarly, nodes at the same position $j$ in different subtasks are equivalent:
\begin{equation}
    c_{[\tau_k,j]} \approx c_{[\tau_\ell,j]}, \quad \forall k,\ell < j.
\end{equation}

\textbf{Center prediction nodes} $c_{[\tau_k,k]}$: because different $k$ lead to different upstream chain structures (left chain length $k-1$, right chain length $n-k$), no structure-preserving bijection can exist when $k \neq \ell$. Hence each $c_{[\tau_k,k]}$ forms its own equivalence class.

Equivalence classes:
\begin{equation}
\begin{aligned}
    \mathcal{Q} = &\big\{ \{c_{[\tau_k,i]} \mid k > i\} \mid i = 1,\dots,n-1 \big\} \\
    &\cup \big\{ \{c_{[\tau_k,j]} \mid k < j\} \mid j = 2,\dots,n \big\} \\
    &\cup \big\{ \{c_{[\tau_k,k]}\} \mid k = 1,\dots,n \big\}.
\end{aligned}
\end{equation}

By Theorem~\ref{thm:minimal_common_supergraph}, selecting representatives yields the minimal common supergraph vertex set:
\begin{equation}
    \mathcal{S}^* = \{c_{[\tau_n,1]},\dots,c_{[\tau_n,n-1]}\}
    \cup \{c_{[\tau_1,2]},\dots,c_{[\tau_1,n]}\}
    \cup \{c_{[\tau_1,1]},c_{[\tau_2,2]},\dots,c_{[\tau_n,n]}\}.
\end{equation}
Using structure preservation to add edges, we obtain the Hasse diagram of Butterfly Attention $\mathcal{H}^* = (\mathcal{S}^*, \mathcal{E}^*)$, shown in Figure~\ref{fig:蝴蝶注意力的哈斯图2}.

\begin{figure}[H]
    \centering
    $\vcenter{\hbox{%
        \begin{tikzpicture}[
            >=Stealth,
            vertex/.style={
                circle,
                fill=connectColor!18,
                draw=none,
                minimum size=2.2cm,
                inner sep=0pt,
                font=\sffamily\bfseries\fontsize{15}{12}\selectfont,
                text=black!75
            },
            edge/.style={
                ->,
                line width=0.5pt,
                color=connectColor!50!black,
                shorten <=3pt,
                shorten >=3pt
            },
            dashed edge/.style={
                ->, dashed,
                line width=0.5pt,
                color=connectColor!50!black,
                shorten <=3pt,
                shorten >=3pt
            }
        ]
        \node[vertex] (1) at (0,0) {$c_{[\tau_{n},1]}$};
        \node[vertex] (2) at (3,0) {$c_{[\tau_{n},2]}$};
        \node[vertex] (n-2) at (7.5,0) {$c_{[\tau_{n},n-2]}$};
        \node[vertex] (n-1) at (10.5,0) {$c_{[\tau_{n},n-1]}$};

        \node[vertex] (m1) at (0-1.5,3) {$c_{[\tau_1,1]}$};
        \node[vertex] (m2) at (3-1.5,3) {$c_{[\tau_2,2]}$};
        \node[vertex] (m3) at (6-1.5,3) {$c_{[\tau_3,3]}$};
        \node[vertex] (mn-1) at (10.5-1.5,3) {{$c_{[\tau_{n-1},n-1]}$}};
        \node[vertex] (mn) at (13.5-1.5,3) {$c_{[\tau_{n},n]}$};

        \node[vertex] (f1) at (0,6) {$c_{[\tau_{1},2]}$};
        \node[vertex] (f2) at (3,6) {$c_{[\tau_{1},3]}$};
        \node[vertex] (f3) at (6,6) {$c_{[\tau_{1},4]}$};
        \node[vertex] (fn) at (10.5,6) {$c_{[\tau_{1},n]}$};

        \draw[edge] (1) -- (2);
        \draw[dashed edge] (2) -- (n-2)
            node[midway, above, font=\itshape] {$\cdots$};
        \draw[edge] (n-2) -- (n-1);

        \draw[edge] (f2) -- (f1);
        \draw[edge] (f3) -- (f2);
        \draw[edge] (fn) -- (mn-1);
        \draw[dashed edge] (fn) -- (f3)
            node[midway, above, font=\itshape] {$\cdots$};

        \draw[edge] (1) -- (m2);
        \draw[edge] (2) -- (m3);
        \draw[edge] (n-2) -- (mn-1);
        \draw[edge] (n-1) -- (mn);
            
        \draw[edge] (f3) -- (m3);
        \draw[edge] (f2) -- (m2);
        \draw[edge] (f1) -- (m1);
        \end{tikzpicture}%
    }}$
    \caption{Hasse diagram of the minimal merged task for Butterfly Attention.}
    \label{fig:蝴蝶注意力的哈斯图2}
\end{figure}

From the Hasse diagram, the partial order $\le_{\tau^*}$ is:
\begin{itemize}
    \item Forward chain: $c_{[\tau_n,i]} \le_{\tau^*} c_{[\tau_n,j]}$ for $i,j\in[1,n-1]$ with $i\le j$.
    \item Backward chain: $c_{[\tau_1,i]} \ge_{\tau^*} c_{[\tau_1,j]}$ for $i,j\in[2,n]$ with $i\le j$.
    \item Prediction flows: $c_{[\tau_n,i-1]} \le_{\tau^*} c_{[\tau_i,i]} \ge_{\tau^*} c_{[\tau_1,i]}$ for $i\in [2,n-1]$. At the endpoints, $c_{[\tau_1,1]} \ge_{\tau^*} c_{[\tau_1,2]}$ and $c_{[\tau_n,n-1]} \le_{\tau^*} c_{[\tau_n,n]}$.
\end{itemize}

\subsubsection{Minimal Merged Task}

The corresponding minimal merged task $\tau^* = (\mathcal{W}^*, \mathcal{V}^*, M^*)$:
\begin{itemize}
    \item Input sequence $\mathcal{W}^*$ of length $3n-2$:
    \begin{equation}
        \mathcal{W}^* = [w_1,\dots,w_{n-1},\; w_2,\dots,w_n,\; w'_1,\dots,w'_n].
    \end{equation}
    That is, the first $n-1$ positions are the forward sequence, the middle $n-1$ positions are the backward sequence, and the last $n$ positions are the aggregated-token sequence.
    
    \item Label $\mathcal{V}^*$ defined only on the last $n$ positions: $\mathcal{V}^*(2n-2+i) = w_i$, $i=1,\dots,n$.
    
    \item Mask $M^*$ is a square matrix of order $3n-2$, shown in Figure~\ref{fig:三流注意力掩码}. Its closed-form expression is (indices starting from $0$):
    \begin{equation}
        M^*_{i,j} =
        \begin{cases}
            0, & 0 \le i < n-1,\; 0 \le j \le i \quad \\
            0, & n-1 \le i < 2n-2,\; i \le j < 2n-2 \quad \\
            0, & 2n-2 \le i < 3n-2,\; 
            \begin{array}{l}
                {(j < i - 2n + 2) \;\lor}\\ (i-n+1 < j < 2n-2) \;\lor\; (j = i)
            \end{array} \\[4pt]
            & \quad \\[12pt]
            -\infty, & \text{otherwise}.
        \end{cases}
    \end{equation}
\end{itemize}

\begin{figure}[H]
    \centering
    $\vcenter{\hbox{%
        \mask{
    1,0,0,0,0,0,0,0,0,0,0,0,0,
    1,1,0,0,0,0,0,0,0,0,0,0,0,
    1,1,1,0,0,0,0,0,0,0,0,0,0,
    1,1,1,1,0,0,0,0,0,0,0,0,0,
    0,0,0,0,1,1,1,1,0,0,0,0,0,
    0,0,0,0,0,1,1,1,0,0,0,0,0,
    0,0,0,0,0,0,1,1,0,0,0,0,0,
    0,0,0,0,0,0,0,1,0,0,0,0,0,
    0,0,0,0,1,1,1,1,1,0,0,0,0,
    1,0,0,0,0,1,1,1,0,1,0,0,0,
    1,1,0,0,0,0,1,1,0,0,1,0,0,
    1,1,1,0,0,0,0,1,0,0,0,1,0,
    1,1,1,1,0,0,0,0,0,0,0,0,1
}{13}
    }}$
    \caption{Example Butterfly Attention mask matrix $M^*$.}
    \label{fig:三流注意力掩码}
\end{figure}

\subsection{Section Summary}

In this section, we applied the theoretical framework established in Section~3 and~4. We analyzed causal attention and proved that it constitutes the minimal merged task. We then examined the block-generation task family and the bidirectional attention task family, and constructed two novel attention mechanisms: Block Two-Stream Attention and Butterfly Attention. For each attention mechanism, the same unified analysis and construction procedure was followed:
\begin{enumerate}
    \item Define the task family $\mathcal{T}$ (Definition~\ref{def:task_family});
    \item Derive the Hasse diagram of a single subtask (Definition~\ref{def:task_hasse});
    \item Compute node equivalence relations and equivalence classes (Definition~\ref{def:node_equivalence});
    \item Construct the minimal merged task via the minimal common supergraph criterion (Theorem~\ref{thm:minimal_common_supergraph});
    \item Derive the corresponding mask matrix.
\end{enumerate}

These three case studies demonstrate that the proposed framework can systematically produce training schemes and mask matrices starting from the definition of a task family, providing a mathematical foundation and a standard derivation pipeline for designing novel information-flow configurations of attention mechanisms.

\section{Conclusion}
\label{sec:conclusion}

This paper systematically investigates the problem of designing training tasks for attention mechanisms from two complementary directions. The first direction models the information flow during attention training as a Hasse diagram: we proved that the information flow of a multi-layer Transformer, given sufficient depth, converges to a stable partial-order structure, which we precisely characterize by a Hasse diagram. The second direction reduces the design of parallel training tasks to the problem of finding a minimal common supergraph of Hasse diagrams: given a set of training tasks, their optimal merged scheme is equivalent to the minimal common supergraph of their Hasse diagrams, a correspondence rigorously guaranteed by the minimal common supergraph criterion.

By combining these two directions, we obtain a design-capable theoretical framework. Mask design for attention mechanisms is no longer an empirical trial-and-error process; instead, starting from the definition of a task family, one derives the Hasse diagrams, computes equivalence classes, constructs the minimal common supergraph, and finally obtains the optimal mask in a fully constructive manner.

Currently, the framework systematically handles only the merging of dense training tasks. For sparse attention design, for the sake of theoretical simplicity, our tools support first merging the task family into a dense minimal common supergraph and then refining it into a sparse version (Definition~\ref{def:sparse_minimal_merged_task}). This design strategy mirrors the way sparse versions of causal attention are currently constructed.

This work opens new research directions for attention mechanism design. Future work includes:

\begin{enumerate}
    \item Analyzing mainstream linear attention mechanisms to verify whether their information flow also gives rise to a partial-order structure. We have shown that the information flow of standard Transformers forms a partial order, but when model architectures employ linear attention such as KDA\ \cite{kimiteam2025kimilinearexpressiveefficient} or GDN\ \cite{yang2025gateddeltanetworksimproving}, our framework cannot be directly applied. Extending the framework to accommodate such attention variants is a natural next step.
    
    \item Integrating the Hasse diagram framework with concrete training objectives to design further attention mechanisms. This paper has demonstrated three designs---causal, Block Two-Stream, and Butterfly---but the space of task-family definitions is far from exhausted. Future work on novel attention mechanisms can focus on designing better task families, particularly for MAE-like scenarios, and empirically validating their effectiveness.
\end{enumerate}
\bibliographystyle{unsrt}
\bibliography{references}  

\end{document}